\newcommand*\samethanks[1][\value{footnote}]{\footnotemark[#1]}
\newtheorem{definition}{Definition}
\newtheorem{problem}{Problem}
\newtheorem{theorem}{Theorem}
\newtheorem{lemma}{Lemma}
\newtheorem{remark}{Remark}
\definecolor{mydarkred}{rgb}{0.6,0,0}
\definecolor{mydarkgreen}{rgb}{0,0.6,0}
\definecolor{mydarkblue}{rgb}{0,0,0.6}
\title{TOHAN: A One-step Approach towards Few-shot Hypothesis Adaptation}
\author{
\textbf{Haoang Chi}$^{1,2,6}$\thanks{Equal contribution. Work done when Haoang Chi remotely visited HKBU.},~~\textbf{Feng Liu}$^{3*}$,~~\textbf{Wenjing Yang}$^{1}$\thanks{Corresponding author.},~~ \textbf{Long Lan}$^{1,6}$\samethanks[2],~~\textbf{Tongliang Liu}$^{4}$,\\\vspace{-2.5mm}
\textbf{Bo Han}$^{2}$,~~\textbf{William K. Cheung}$^{2}$,~~\textbf{James T. Kwok}$^{5}$\\
$^1$ State Key Laboratory of High Performance Computing, College of CS, NUDT\\
$^2$ CS Department, HKBU\\
$^3$ DeSI Lab, AAII, Faculty of Engineering and IT, UTS\\
$^4$ TML Lab, School of CS, Faculty of Engineering, USYD\\
$^5$ CSE Department, HKUST\\
$^6$ Peng Cheng Laboratory, Shenzhen\\
\texttt{haoangchi618@gmail.com, feng.liu@uts.edu.au, \{wenjing.yang, long.lan\}@nudt.edu.cn},~~
\texttt{\{bhanml, william\}@comp.hkbu.edu.hk}\\
\texttt{ jamesk@cse.ust.hk}
}
\begin{document}

\maketitle

\begin{abstract}
In \emph{few-shot domain adaptation} (FDA), classifiers for the {target domain} are trained with \emph{accessible} labeled data in the \emph{source domain} (SD) and few labeled data in the \emph{target domain} (TD). However, data usually contain private information in the current era, e.g., data distributed on personal phones. Thus, the private data will be leaked if we directly access data in SD to train a target-domain classifier (required by FDA methods). In this paper, to prevent privacy leakage in SD, we consider a very challenging problem setting, where the classifier for the TD has to be trained using few labeled target data and a well-trained SD classifier, named \emph{few-shot hypothesis adaptation} (FHA). In FHA, we cannot access data in SD, as a result, the private information in SD will be protected well. 
To this end, we propose a \emph{target-oriented hypothesis adaptation network} (TOHAN) to solve the FHA problem, where we generate highly-compatible unlabeled data (i.e., an intermediate domain) to help train a target-domain classifier.
TOHAN maintains two deep networks simultaneously, in which one focuses on learning an intermediate domain and the other takes care of the intermediate-to-target distributional adaptation and the target-risk minimization. 
Experimental results show that TOHAN outperforms competitive baselines significantly.
\end{abstract}

\section{Introduction}

In \emph{domain adaptation} (DA) \cite{jiahua2020what,DBLP:conf/aaai/JingLDWDSW20,DBLP:conf/nips/SongCWSS19,DBLP:conf/cvpr/SongCYWSMS20,DBLP:conf/icml/WeiZHY18}, we aim to train a target-domain classifier with data in source and target domains. Based on the availability of data in the target domain (e.g., fully-labeled data, partially-labeled data and unlabeled data), DA is divided into three categories: \emph{supervised DA} (SDA) \cite{sukhija2016}, semi-supervised DA \cite{DBLP:conf/ijcai/JiangWHSQL20} and \emph{unsupervised DA} (UDA) \cite{yiyang2020clarinet}. Since SDA methods outperform UDA methods for the same quantity of target data \cite{motiian2017few}, it becomes attractive if we can train a good target-domain classifier using labeled source data and few labeled target data \cite{DBLP:conf/icml/TeshimaSS20}.

\begin{figure}[!tp]
    \centering
    \includegraphics[width=0.6\textwidth]{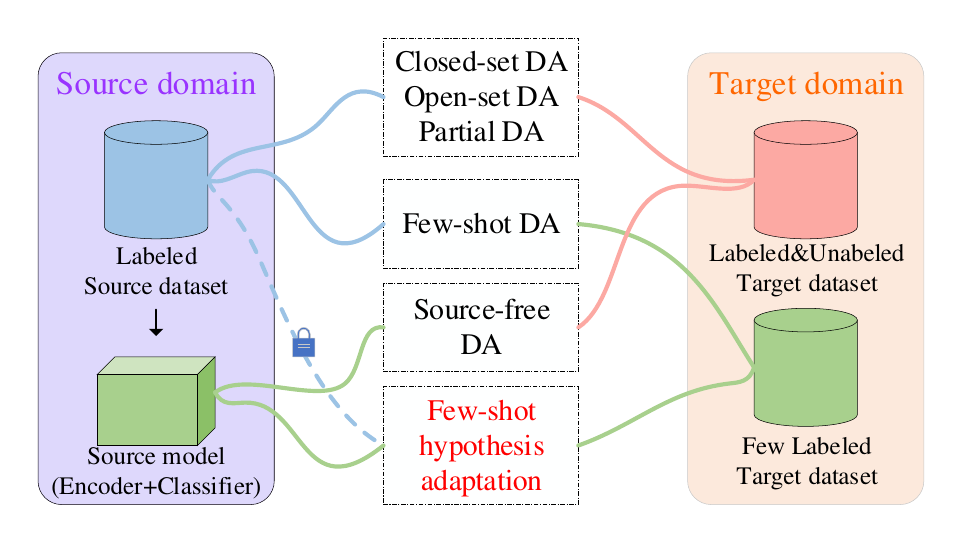}
	\caption{The \emph{few-shot hypothesis adaptation} (FHA) and existing domain adaptation problem settings. In FHA, we aim to train a classifier for the target domain only using few labeled target  data  and a  well-trained  source-domain classifier. Namely, we do not access any source data when training the target-domain classifier. This setting prevents the data leakage of the source domain passively. The lock means we cannot access data in the source domain.}
	\label{fig: setting}
\end{figure}


Hence, \emph{few-shot domain adaptation} (FDA) methods \cite{motiian2017few} are proposed to train a target-domain classifier with \emph{accessible} labeled data from the {source domain} and few labeled data from the target domain. Compared to SDA and UDA methods, FDA methods only require few data in the target domain, which is suitable to solve many problems, e.g., medical image processing \cite{wang2020focalmix}.
Existing FDA methods involve many approaches and  applications. Structural casual model \cite{DBLP:conf/icml/TeshimaSS20} has been proposed to overcome the problem caused by apparent distribution discrapancy. Since deep neural networks tend to overfit the few-labeled data in the training process, a meta-learning method becomes an effective solution to the FDA problem \cite{sun2019meta}. Besides, FDA methods perform well in face generation \cite{DBLP:conf/cvpr/YangL20} and virtual-to-real scene parsing \citep{zhang2019few}.

However, it is risky to directly access source data for training a target-domain classifier (required by FDA methods) due to the private information contained in the source domain. In the current era, labeled data are distributed over different physical devices and usually contain private information, e.g., data on personal phones or from surveillance cameras \cite{DBLP:conf/icml/LiangHF20}. 
Since FDA methods \cite{DBLP:conf/icml/TeshimaSS20} require abundant labeled source data to train a target-domain classifier, they may leak private information in the training process, which may result in massive loss \cite{DBLP:conf/uss/Jayaraman019}. 

In this paper, to prevent the private data leakage of the source domain in existing FDA methods, we propose a novel and very challenging problem setting, where the classifier for the target domain has to be trained using few labeled target data and a well-trained source-domain classifier, named \emph{few-shot hypothesis adaptation} (FHA, see Figure~\ref{fig: setting}). In the literature \cite{DBLP:conf/icml/LiangHF20}, researchers have adapted a source-domain hypothesis to be a target-domain classifier when abundant unlabeled target data are available. However, since these methods require abundant target data, they cannot address the FHA problem well, which has been empirically verified in Table~\ref{digits_results} and Table~\ref{tab:object}.


The key benefit of FHA is that we do not need to access the source data, which wisely avoids private-information leakage of source domain under mild assumptions. Besides, since the size of datasets of most domains is large in the real world, existing FDA methods will take a long time to train a target-domain classifier. However, in FHA, we train a target-domain classifier only with a source classifier and few labeled target data, reducing the computation cost greatly. 

To address FHA, we first revisit the theory related to learning from few labeled data and try to find out if FHA can be addressed in principle. Fortunately, we find that, in \emph{semi-supervised learning} (SSL) where only few labeled data available, researchers have already shown that, a good classifier can be learned if we have abundant unlabeled data that are compatible with the labeled data. Thus, motivated by the SSL, we aim to address FHA via gradually generating highly compatible data for the target domain.
To this end, we propose a \emph{target-oriented hypothesis adaptation network} (TOHAN) to solve the FHA problem. TOHAN maintains two deep networks simultaneously, in which one focuses on learning an intermediate domain (i.e., learning compatible data) and the other takes care of the intermediate-to-target distributional adaptation (Figure~\ref{fig: model}). 

Specifically, due to the scarcity of target data, we cannot directly generate compatible data for the target domain. Thus, we first generate an intermediate domain where data are compatible with the given source classifier and the few labeled target data. Then, we conduct the intermediate-to-target distributional adaptation to make the generated intermediate domain close to the target domain. Eventually, we embed the above procedures into our one-step solution, TOHAN, to enable gradual generation of an intermediate domain that contains highly compatible data for the target domain. According to the learnability of SSL, with the generated ``target-like'' intermediate domain, TOHAN can learn a good target-domain classifier.

We conduct experiments on $8$ FHA tasks on $5$ datasets (\emph{MNIST}, \emph{SVHN}, \emph{USPS}, \emph{CIFAR}-$10$ and \emph{STL}-$10$). We compare TOHAN with $5$ competitive baselines. Experiments show that TOHAN effectively transfers knowledge of the source hypothesis to train a target-domain classifier when we only have few labeled target data. In other words, our paper opens a new door to the domain adaptation field, which solves private-data leakage and data shortage simultaneously.


\section{Few-shot Hypothesis Adaptation}
In this section, we formalize a novel and challenging problem setting, called \emph{few-shot hypothesis adaptation} (FHA).
Let $\mathcal{X}\subset\mathbb{R}^{d}$ be a feature (input) space and $\mathcal{Y}:=\{1,\dots,N\}$ be a label (output) space, and $N$ is the number of classes. A domain \cite{zhen2021open} for the FHA problem is defined as follows.

\begin{figure*}[!tp]
    \centering
    \includegraphics[width=1.0\textwidth]{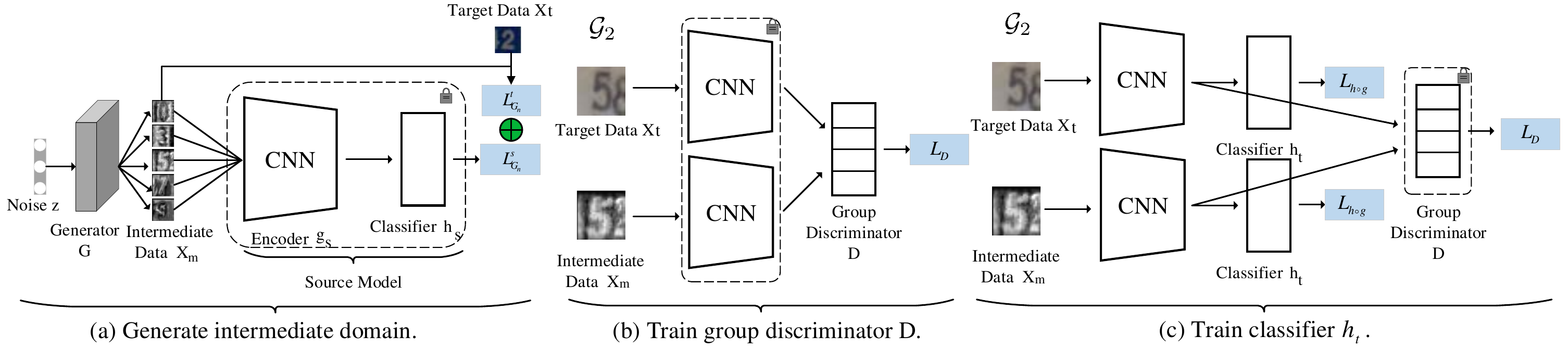}
	\caption{Overview of \emph{target-oriented hypothesis adaptation network} (TOHAN). It consists of generator \emph{G}, encoder \emph{$g_s$}, \emph{$g_t$} (initialize \emph{$g_t$}=\emph{$g_s$}), classifier \emph{$h_s$}, \emph{$h_t$} (initialize \emph{$h_t$}=\emph{$h_s$}) and group discriminator \emph{D}. (a) Firstly, we train a generator \emph{G} using the source classifier \emph{$g_s$}, \emph{$h_s$} and target data \emph{$D_t$}. Then we generate intermediate data between the two domains. (b) We freeze \emph{$g_t$} and \emph{$h_t$} and update group discriminator \emph{D}. (c) We freeze \emph{D} and update \emph{$g_t$} and \emph{$h_t$}. In subfigures (b) and (c), they show a data pair from $\mathcal{G}_2$, where the two data points come from the same class but different domains.}
	\label{fig: model}
\end{figure*}

\begin{definition}[Domains for FHA]
Given random variables $X_s,X_t\in \mathcal{X}$, $Y_s,Y_t\in \mathcal{Y}$, the source and target domains are joint distributions $P\left(X_s,Y_s\right)$ and $P\left(X_t,Y_t\right)$, respectively, where the joint distributions $P\left(X_s,Y_s\right)\,\neq\,P\left(X_t,Y_t\right)$ and $\mathcal{X}$ is compact.
\end{definition}

Then the FHA problem is defined as follows.

\begin{problem}[FHA]
Given a model (consisting of an encoder $g_s$ and a classifier $h_s$) trained on the source domain $P\left(X_s,Y_s\right)$ and independent and identically distributed (i.i.d.) labeled data $D_{t}=\left\{\left(x_{t}^{i},y_{t}^{i}\right)\right\}_{i=1}^{n_t}$ ($n_t\le 7N$, following \cite{Park_2019_ICCV}) drawn from the target domain $P\left(X_t,Y_t\right)$, the aim of FHA is to train a classifier $\emph{$h_t$}:\mathcal{X}\to\mathcal{Y}$ with $g_s$, $h_s$ and $D_t$ such that $h_t$ can accurately classify target data drawn from $P\left(X_t,Y_t\right)$. 
\end{problem}

\begin{remark}\upshape
\label{rem:assum}
In FHA, there exists an assumption: malicious attackers cannot easily find source-domain-like data from the Internet and via some other ways. Otherwise, attackers may use the attack methods \cite{DBLP:conf/cvpr/ZhangJP0LS20} to recover the training data, leading to data leakage.
\end{remark}

\textbf{Possible Privacy-leakage Issues in FHA.}
The assumption in Remark~\ref{rem:assum} is derived from the attack methods that aim to recover training data from a well-trained model. According to recent model-inversion attack methods \cite{DBLP:conf/cvpr/ZhangJP0LS20}, they need to access auxiliary data whose background is similar to the training data to help recover input data. There also exists a white-box inference attack method \cite{DBLP:conf/sp/NasrSH19} that determines a data point's membership in the training set of the model. Therefore, FHA belongs to passive protection, requiring the training data of source model are sufficiently different from public data. To thoroughly avoid this issue, data owners might utilize the defending techniques (against the model-inversion attacks) to train their source models.

\textbf{Comparison with Few-shot Learning.} 
The main difference between FHA and \emph{few-shot learning} (FSL) is the representation of source domain. For FHA, source domain is represented by a model trained with source data. While, for FSL, source domain is represented by labeled data themselves \cite{liu2019learning,liu2021a}. Besides, the data used to train source classifiers come from different domains from target data in FHA, while source data and target data come from the same domain in FSL. The works \cite{DBLP:conf/iccv/HariharanG17,DBLP:conf/cvpr/WangGHH18} propose to hallucinate additional training examples to solve few-shot visual recognition, inspired by human's visual imagination. Meta-learning \cite{DBLP:conf/icml/FinnAL17,DBLP:conf/nips/SnellSZ17} also performs well in FSL by learning the distribution of tasks with high generalization ability. As using few data for training easily leads to overfitting, there are works \cite{DBLP:journals/pami/Fei-FeiFP06,DBLP:conf/eccv/ZhangTJ18} trying to constrain the hypothesis space to avoid it. \emph{Data augmentation generative adversarial network} (DAGAN) \cite{antoniou2017data} aims to augment target data through a conditional generative adversarial network to enhance the few-shot learning procedure.

\textbf{Comparison with UDA.} The main differences between FHA and UDA lie in the amount and label of data in the two domains. For the source domain, UDA requires a large amount of labeled data \cite{jiahua2019semantic,li2021how}, while FHA only requires a well-trained model. For the target domain, UDA requires a large amount of unlabeled data \cite{sun2021what,li2021bridging}, while FHA requires \emph{few} labeled data.

\textbf{Comparison with FDA.}
With the development of FSL, researchers also apply ideas of FSL into domain adaptation, called \emph{few-shot domain adaptation} (FDA). 
FADA \citep{motiian2017few} is a representative FDA method, which pairs data from the source domain and data from the target domain and then follows the adversarial domain adaptation method. Casual mechanism transfer \citep{DBLP:conf/icml/TeshimaSS20} is another novel FDA method dealing with a meta-distributional scenario, in which the data generating mechanism is invariant among domains. Nevertheless, FDA methods still need to access many labeled source data for training, which may cause the private-information leakage of the source domain. 

\textbf{Comparison with Hypothesis Transfer Learning.}
In \emph{hypothesis transfer learning} (HTL), we can only access a well-trained source-domain classifier and small labeled or abundant unlabeled target data. \cite{DBLP:conf/icml/KuzborskijO13} requires small labeled target data and uses the Leave-One-Out error to find the optimal transfer parameters. Later, SHOT \cite{DBLP:conf/icml/LiangHF20} is proposed to solve the HTL with many unlabeled target data by freezing the source-domain classifier and learning a target-specific feature extraction module. 
As for the universal setting, a two-stage learning process \citep{kundu2020universal} has been proposed to address the HTL problem. Compared with FHA, HTL still requires at least small target data (e.g., at least $12$ samples in binary classification problem \cite{DBLP:conf/icml/KuzborskijO13}, or at least $10\%$ target data are labeled \cite{DBLP:conf/cvpr/AhmedLPR20}). In FHA, we focus on a more challenging situation: only few data (e.g., one sample per class) are available. Besides, previous solutions to HTL mainly focus on mortifying existing hypotheses or loss functions used for fine-tuning. However, our solution stems from the learnability of semi-supervised learning (Section~\ref{sec:SSL_moti}) and try to generate more compatible data, which is quite different from previous works. 

\section{How to Learn from Few-shot Data in Principle}\label{sec:SSL_moti}
From the view of statistical learning theory \cite{book:680}, it is unrealistic to directly learn an accurate target-domain classifier only with few labeled data. However, the amount of labeled data in \emph{semi-supervised learning} (SSL) \cite{DBLP:reference/ml/Zhu10} is also few (e.g., one sample per class), but SSL methods still achieves good performance across various learning tasks, which motivates us to consider solving FHA in the view of SSL. 
First, we will show theoretical analysis regarding learnability of SSL.

\paragraph{Learnability of SSL.} For simplicity, we consider the $0$-$1$ semi-supervised classification problem. Let $c^*:\mathcal{X}\to\{0,1\}$ be the optimal target classifier and $\mathcal{H}=\{h:\mathcal{X}\to\{0,1\}\}$ is a hypothesis space. 
Let $err(h)=\mathbb{E}_{x\sim P}[h(x)\neq c^{*}(x)]$ be the true error rate of a hypothesis $h$ over a distribution $P$. In SSL, its learnability mainly depends on the compatibility $\chi:\mathcal{H}\times\mathcal{X}\mapsto[0,1]$ that measures how ``compatible'' $h$ is to an unlabeled data $x$. Let $\chi(h,P)=\mathbb{E}_{x\sim P}[\chi(h,x)]$ be the expectation of compatibility of data from $P$ on a classifier $h$. If the unlabeled data and $c^*$ are highly compatible (i.e., $\chi(c^*,P)$ closes to $1$), then, in theory, we can learn a good classifier with few labeled data and sufficient unlabeled data. Specifically, we have the following theorem (see proof in Appendix~\ref{Asec:Thm}). 
\begin{theorem}
\label{thm:1}
Let $\hat{\chi}(h,S)=\frac{1}{|S|}\sum_{x\in S}\chi(h,x)$ be the empirical compatibility over unlabeled dataset $S$. Let $\mathcal{H}_0 = \{h\in\mathcal{H}:\widehat{err}(h)=0\}$. If $c^*\in\mathcal{H}$ and $\chi(c^*,P)=1-t$, then $m_u$ unlabeled data and $m_l$ labeled data are sufficient to learn to error $\epsilon$ with probability $1-\delta$, for
\begin{equation}
    m_u=\mathcal{O}\left(\frac{VCdim(\chi(\mathcal{H}))}{\epsilon^2}\log\frac{1}{\epsilon}+\frac{1}{\epsilon^2}\log\frac{2}{\delta}\right)
\end{equation}
and
\begin{equation}
    m_l=\frac{2}{\epsilon}\left[\ln(2\mathcal{H}_{P,\chi}(t+2\epsilon)[2m_l,P])+\ln\frac{4}{\delta}\right],
\end{equation}
where $\chi(\mathcal{H})=\{\chi_h:h\in\mathcal{H}\}$, $\chi_h(\cdot)=\chi(h,\cdot)$, and $\mathcal{H}_{P,\chi}(t+2\epsilon)[2m_l,P]$ is the expected number of splits of $2m_l$ data drawn from $P$ using hypotheses in $\mathcal{H}$ of compatibility more than $1-t-2\epsilon$. In particular, with probability at least $1-\delta$, we have $err(\hat{h})\le\epsilon$,
where
\begin{equation}
    \hat{h}=\mathop{\arg\max}_{h \in \mathcal{H}_0}\hat{\chi}(h,S).
\end{equation}
\end{theorem}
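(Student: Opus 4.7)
The plan is to follow the Balcan--Blum two-stage argument: use the unlabeled sample to prune $\mathcal{H}$ down to those hypotheses that are essentially as compatible as $c^*$, and then apply a standard realizable-case PAC bound to this pruned class using the labeled sample. The two sample-complexity expressions in the statement each control one of these stages.

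First I would establish uniform convergence of the empirical compatibility. Since $\chi(\mathcal{H})$ is a class of $[0,1]$-valued functions of VC-dimension $VCdim(\chi(\mathcal{H}))$, a standard VC uniform-convergence bound gives: with the prescribed $m_u = \mathcal{O}\!\bigl(VCdim(\chi(\mathcal{H}))/\epsilon^{2}\cdot\log(1/\epsilon)+(1/\epsilon^{2})\log(2/\delta)\bigr)$ unlabeled samples, with probability at least $1-\delta/2$,
\begin{equation}
\sup_{h\in\mathcal{H}}\bigl|\hat{\chi}(h,S)-\chi(h,P)\bigr|\le\epsilon.
\end{equation}
On this good event, the hypothesis $c^{*}$, which has $\chi(c^{*},P)=1-t$, satisfies $\hat{\chi}(c^{*},S)\ge 1-t-\epsilon$; moreover, $c^{*}\in\mathcal{H}_{0}$ because $c^{*}$ labels the training set correctly by definition. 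Hence any $\hat{h}\in\arg\max_{h\in\mathcal{H}_0}\hat{\chi}(h,S)$ obeys $\hat{\chi}(\hat{h},S)\ge 1-t-\epsilon$, and applying the uniform bound a second time gives $\chi(\hat{h},P)\ge 1-t-2\epsilon$. Thus $\hat{h}$ lies in the population-defined class
\begin{equation}
\mathcal{H}_{P,\chi}(t+2\epsilon)=\{h\in\mathcal{H}:\chi(h,P)\ge 1-t-2\epsilon\},
\end{equation}
which depends only on $P$ and $\chi$, not on the labeled data.

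Second, I would apply the realizable-case PAC bound to this restricted class. Note that $c^{*}\in\mathcal{H}_{P,\chi}(t+2\epsilon)$, and $\hat{h}$ is a zero empirical-error hypothesis in this class because $\hat{h}\in\mathcal{H}_{0}$. The standard growth-function bound for realizable PAC learning states that for any class with growth function $\Pi[2m_l]$ on $2m_l$ points, with probability at least $1-\delta/2$ every consistent hypothesis has true error at most $\epsilon$, provided $m_l\ge(2/\epsilon)\bigl[\ln(2\Pi[2m_l])+\ln(4/\delta)\bigr]$. Plugging in $\Pi[2m_l]=\mathcal{H}_{P,\chi}(t+2\epsilon)[2m_l,P]$, the expected number of distinct labelings induced by this restricted class on $2m_l$ samples drawn from $P$, reproduces exactly the labeled complexity in the theorem. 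A union bound over the two high-probability events gives $err(\hat{h})\le\epsilon$ with probability at least $1-\delta$.

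The main subtlety will be justifying the realizable bound on the \emph{data-dependent} class $\mathcal{H}_{P,\chi}(t+2\epsilon)$: one has to argue that this class is fixed once $P$ and $\chi$ are fixed, hence independent of the labeled sample $D_l$, so that the growth function based on $P$ legitimately controls the labeled complexity. The usual way is to condition on $\mathcal{H}_{P,\chi}(t+2\epsilon)$ being defined purely by the population, observe that the labeled sample is drawn independently, and then invoke the double-sample symmetrization argument underlying the growth-function bound, which is exactly what yields the $\mathcal{H}_{P,\chi}(t+2\epsilon)[2m_l,P]$ factor rather than a worst-case growth function of all of $\mathcal{H}$. Once this is handled, the rest of the proof is a clean two-step VC/PAC calculation.
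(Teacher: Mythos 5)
Your proposal is correct and follows essentially the same two-stage Balcan--Blum argument as the paper: a VC uniform-convergence bound on $\chi(\mathcal{H})$ over the unlabeled sample to show $\hat{h}$ lands in the population-defined class $\mathcal{H}_{P,\chi}(t+2\epsilon)$, followed by the realizable-case bound with the expected number of partitions of that restricted class, and a union bound. If anything, your write-up is slightly more explicit than the paper's (e.g., in noting $c^*\in\mathcal{H}_0$ and in addressing why the restricted class is sample-independent), and it avoids the paper's apparent typo where the conclusion of the second stage is written as $err(h)\ge\epsilon$ rather than $err(h)\le\epsilon$.
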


\begin{remark}\upshape
If the unlabeled data are highly compatible to $c^*$, $t$ is small, which results in a smaller $m_l$. Namely, with the smaller $m_l$, we can still achieve a low error rate. In view of Theorem \ref{thm:1}, it is clear that SSL will be learnable if many compatible unlabeled data are available. Motivated by SSL, we wonder if we can generate compatible data to help our learning task. The answer is \emph{affirmative}. 
\end{remark} 

\paragraph{Solving FHA in Principle.} Motivated by Theorem~\ref{thm:1}, finding many highly compatible unlabeled data is a breakthrough point for FHA. Hence, generating unlabeled target data is a straightforward solution. However, due to the shortage of existing target data, directly generating them is unrealistic. To solve this problem, we can ask for help from the source classifier. In our paper, we first try to generate intermediate domain $P_m$ containing knowledge of source and target domains, which are compatible with both the source classifier and target classifier, i.e.,
\begin{align}
\label{eq:55}
    P_m=\mathop{\arg\max}\limits_{P}[\chi(h_s,P)+\chi(h_t,P)],
\end{align}
where $\chi(h_s,P)$ (resp. $\chi(h_t,P)$) measures how compatible $h_s$ (resp. $h_t$) is with the data distribution $P$. Then, we will adapt intermediate domain $P_m$ to the target domain via distributional adaptation with the training procedure going on. Finally, we can obtain many unlabeled data that are compatible with $h_s$ and $h_t$ (more compatible with $h_t$), meaning that, based on Theorem~\ref{thm:1}, we can address FHA in principle. According to Eq.~\eqref{eq:55}, it can be seen that we can have two straightforward solutions: maximizing $\chi(h_s,P)$ or $\chi(h_t,P)$, corresponding to S+FADA and T+FADA in benchmark solutions. The results in Table~\ref{digits_results} and Table~\ref{tab:object} indicate that these two straightforward solutions cannot address FHA well, which motivates us to maximize them simultaneously, which is realized below.

\section{Target-Oriented Hypothesis Adaptation Network for FHA Problem}

This section presents a powerful one-step approach: \emph{target-oriented hypothesis adaptation network} (TOHAN, see Figure~\ref{fig: model}). TOHAN can generate data that are highly compatible with both the source classifier and target classifier and adapt the knowledge of these data to the target domain gradually.



\paragraph{Intermediate domain generation.}
The first step of TOHAN is to generate the intermediate domain data (Figure~\ref{fig: model}a).  We input Gaussian random noise \emph{z} to a generator $G_n$ (taking the $n^{th}$ class for an example), then the generator outputs generated data. We aim to generate data satisfying (1) the generated data $G_n(z)$ can be correctly classified by the given source classifier \emph{$f_s=h_s\circ g_s$}, and (2) $G_n(z)$ becomes closer to the target domain with training procedure going on. Thus, there are two loss functions regarding the intermediate domain generation. The first one is as follows.


Without loss of the generality, we assume $G_n(z)$ generates $B$ images, where $B$ is the batchsize in the training process of TOHAN. When $G_n(z)$ is inputted to the source-domain classifier \emph{$f_s$}, we will obtain an $B\times N$ matrix $\mathbf{G}_n^M$, where the $i^{th}$ row in $\mathbf{G}_n^M$ represents probability of the $i^{th}$ generated image belonging to each class. Thus, the $n^{th}$ column in $\mathbf{G}_n^M$ represents the probability that the $B$ generated images belongs to the $n^{th}$ class, and we denote the $n^{th}$ column in $\mathbf{G}_n^M$ as $l_n$. Since $G_n(z)$ aims to generate data belonging to the $n^{th}$ class, we should update parameters of $\mathbf{G}_n^M$ to make each element in $l_n$ close to $1$. 
Namely, the first loss function to train the $G_n$ can be defined as
\begin{equation}
    \mathcal{L}_{G_n}^{s}=\frac{1}{B}\left\|l_{n}-\mathbbm{1}\right\|_{2}^{2},
    \label{eq:1}
\end{equation}
where $\mathbbm{1}$ is a $B$-by-$1$ vector whose elements are $1$.

As discussed before, we also want to reduce the distance between the generated data $G_n(z)$ and the target data whose labels are $n$. In this way, we can make the generated data close to the target domain and attain an intermediate domain $P_m$. Following \cite{lin2017focal}, we adopt an augmented $L_1$ distance $\|X-Y\|_1=\sum_i\omega_{i}\left|X_{i}-Y_{i}\right|$, where $\omega_{i}=\left|X_{i}-Y_{i}\right|^{2}/\|X-Y\|_{2}$. Compared to the ordinary $\ell_1$ norm, the augmented $L_1$ distance encourages larger gradients for feature dimensions with higher residual error \citep{lin2017focal}. 
Compared to the $\ell_2$ norm, since $L_1$ distance is more robust to outliers \citep{DBLP:conf/nips/NieHCD10}, it is better to measure the distance between generated images and target images. Thus, the second loss to train $G_n$ is defined as follows,
\begin{equation}
    \mathcal{L}_{G_n}^{t}=\frac{1}{MBK}\sum_{i=1}^{B}\sum_{k=1}^{K}\left\|x_{m}^{i}-x_{t}^{k}\right\|_1,
    \label{eq:2}
\end{equation}
where $M=\max\limits_{x_1,x_2\in\mathcal{X}}\|x_1-x_2\|_1$ ($\mathcal{X}$ is compact and $\|\cdot\|_1$ is continuous) and $G_n(z):=\{x_{m}^{i}\}_{i=1}^{B}$. Combining Eq.~\eqref{eq:1} and Eq.~\eqref{eq:2}, we obtain the total loss to train the generator $G_n$:
\begin{equation}
\begin{aligned}
    \mathcal{L}_{G_n}&=\mathcal{L}_{G_n}^{s}+\lambda\mathcal{L}_{G_n}^{t}=\frac{1}{B}\left\|l_{n}-\mathbbm{1}\right\|_{2}^{2}+\frac{\lambda}{MBK}\sum_{i=1}^{B}\sum_{k=1}^{K}\left\|x_{m}^{i}-x_{t}^{k}\right\|_1,
    \label{eq:3}
\end{aligned}
\end{equation}
where $\lambda$ is a hyper-parameter between two losses to tradeoff the weight of knowledge of source and target domains. 
To ensure that the generated data are high-quality images, we train the generator $G_n$ ($n=1,\dots,N$) for some steps all alone. Note that, Eq.~\eqref{eq:3} corresponds to Eq.~\eqref{eq:55}, and Eq.~\eqref{eq:1} (resp. Eq.~\eqref{eq:2}) is corresponding to $\chi(h_s,P_m)$ (resp. $\chi(h_t,P_m)$). Then we conduct intermediate-to-target distributional adaptation (see the next paragraph) and generation simultaneously.

\paragraph{Intermediate-to-target distributional adaptation.}

Now, we focus on how to construct \emph{domain-invariant representations} (DIP) between the intermediate domain and the target domain. Through DIP, a classifier for the intermediate domain can be used to classify target data well \cite{feng2020learning,li2021how}. 

Since we only have few target data per class, we aim to ``augment'' them. Following \cite{motiian2017few}, we can overcome the shortage of target data by pairing them with the corresponding intermediate data. Specifically, we create $4$ groups of data pairs: $\mathcal{G}_1$ consists of data pairs from the same domain with the same label, $\mathcal{G}_2$ consists of pairs from different domains (one from the intermediate and one from the target domain) but with the same label, $\mathcal{G}_3$ consists of pairs from the same domain with different labels, and $\mathcal{G}_4$ consists of pairs from different domains (one from the intermediate and one from the target domain) and with different labels. 

Based on the above four groups, we construct a four-class group discriminator \emph{D} to decide which of the four groups a given data pair belongs to, which differs from classical adversarial domain adaptation \citep{ganin2016domain,DBLP:conf/ijcai/JiangWHSQL20}. The group discriminator \emph{D} aims to classify the data pair groups. 
As a classification problem, we train \emph{D} with the standard categorical cross-entropy loss:
\begin{equation}
    \mathcal{L}_{D}=-\hat{\mathbb{E}}\left[\sum_{i=1}^{4}y_{\mathcal{G}_i}\log\left(D\left(\phi\left(\mathcal{G}_{i}\right)\right)\right)\right],
    \label{eq:4}
\end{equation}
where $\hat{\mathbb{E}}[\cdot]$ represents the empirical mean value, $y_{\mathcal{G}_i}$ is the label of group $\mathcal{G}_i$, and $\phi(\mathcal{G}_i):=\left[g_{t}(x_1),g_{t}(x_2)\right]$, $(x_1,x_2)\in\mathcal{G}_i$, and $g_t$ is the encoder on target domain. 
Note that we freeze \emph{$g_t$} when minimizing the above loss function (see Figure~\ref{fig: model}b).

Next, we turn to train \emph{$g_t$} and \emph{$h_t$} with the group discriminator \emph{D} fixed, which confuses \emph{D} between $\mathcal{G}_1$ and $\mathcal{G}_2$ (also $\mathcal{G}_3$ and $\mathcal{G}_4$). However, we need \emph{D} to correctly discriminate positive pairs ($\mathcal{G}_1$, $\mathcal{G}_2$) from negative pairs ($\mathcal{G}_3$, $\mathcal{G}_4$). This means that domain confusion and classification are realized at the same time. We firstly initialize \emph{$g_t$} and \emph{$h_t$} with the same weight as \emph{$g_s$} and \emph{$h_s$}, respectively. Motivated by the non-saturating game \citep{goodfellow2016nips}, we minimize the following loss to update \emph{$g_t$} and \emph{$h_t$} (see Figure~\ref{fig: model}c):
\begin{equation}
\label{eq:5}
    \mathcal{L}_{h\circ g}=-\beta \hat{\mathbb{E}}\left[y_{\mathcal{G}_1}\log\left(D\left(\phi\left(\mathcal{G}_2\right)\right)\right)-y_{\mathcal{G}_3}\log\left(D\left(\phi\left(\mathcal{G}_4\right)\right)\right)\right]
    +\hat{\mathbb{E}}\left[\ell\left(f_{t}\left(X_t\right),f_t^*(X_t)\right)\right],
\end{equation}
where $\beta$ is a hyper-parameter to tradeoff confusion and classification and $\ell$ is the cross-entropy loss. $f_{t}:=g_{t}\circ h_{t}$ is the target model and $f_t^*$ is the optimal target model. Corresponding to Theorem~\ref{thm:1}, optimizing the first term in Eq.~\eqref{eq:5} increases compatibility of the target model with the intermediate data, and optimizing the second term in Eq.~\eqref{eq:5} reduces $\widehat{err}(h_t)$, resulting in a smaller $err(h_t)$. Compared to \cite{motiian2017few}, Eq.~\eqref{eq:5} means that we train the target model by confusing $D$ and improving classification accuracy simultaneously.

\begin{algorithm}[!t]
\small
\caption{Target-oriented  hypothesis  adaptation  network (TOHAN)}
\label{alg:algorithm}
\textbf{Input}: encoder \emph{$g_s$}, classifier \emph{$h_s$}, $D_{t}= \left\{x_{t}^{i},y_{t}^{i}\right\}_{i=1}^{n_{t}}$, learning rate $\gamma_{1}$, $\gamma_{2}$, $\gamma_{3}$ and $\gamma_{4}$, total epoch $T_{max}$, pretraining \emph{D} epoch $T_{d}$, adaptation epoch $T_{f}$, network parameter $\left\{\theta_{G_{n}}\right\}_{n=1}^{N}$, $\theta_{h\circ g}$, $\theta_{D}$.\\
{\bfseries 1: Initialize} $\left\{\theta_{G_{n}}\right\}_{n=1}^{N}$ and $\theta_{D}$;

\For{$t=1,2,.....,T_{max}$}{

{\bfseries 2: Initialize} $\mathcal{D}_m=\varnothing$

\For{$n=0,1,\dots,N-1$}{

{\bfseries 3: Generate} random noise $z$;

{\bfseries 4: Generate} data $G_n(z)$ then \textbf{add} them to $\mathcal{D}_m$

{\bfseries 5: Update} $\theta_{G_{n}}\leftarrow\theta_{G_{n}}-\gamma_{1}\nabla \mathcal{L}_{G_{n}}\left(z,D_t\right)$ using Eq.~\eqref{eq:3};
}
\If{$t=T_{max}-T_{f}$}{

\For{$i=1,2,\dots,T_{d}$}{

{\bfseries 6: Sample} $\mathcal{G}_{1}$, $\mathcal{G}_{3}$ from $\mathcal{D}_{m}\times\mathcal{D}_{m}$;

{\bfseries 7: Sample} $\mathcal{G}_{2}$, $\mathcal{G}_{4}$ from $\mathcal{D}_{m}\times \mathcal{D}_{t}$;

{\bfseries 8: Update} $\theta_{D}\leftarrow\theta_{D}-\gamma_{2}\nabla \mathcal{L}_{D}\left(\{\mathcal{G}_i\}_{i=1}^{4}\right)$ using Eq.~\eqref{eq:4};
}
}
\If{$t\ge T_{max}-T_{f}$}{

{\bfseries 9: Sample} $\mathcal{G}_{1}$, $\mathcal{G}_{3}$ from $\mathcal{D}_{m}\times\mathcal{D}_{m}$;

{\bfseries 10: Sample} $\mathcal{G}_{2}$, $\mathcal{G}_{4}$ from $\mathcal{D}_{m}\times \mathcal{D}_{t}$;

{\bfseries 11: Update} $\theta_{h\circ g}\leftarrow \theta_{h\circ g}-
\gamma_{3}\mathcal{L}_{h\circ g}(\{\mathcal{G}_i\}_{i=1}^{4},x_{m},x_{t})$ using Eq.~\eqref{eq:5};

{\bfseries 12: Update} $\theta_{D}\leftarrow\theta_{D}-\gamma_{4}\nabla \mathcal{L}_{D}\left(\{\mathcal{G}_i\}_{i=1}^{4}\right)$ using Eq.~\eqref{eq:4};
}
}
\textbf{Output}: the neural network $h_{t}\circ g_{t}$.
\end{algorithm}

\paragraph{TOHAN: A one-step solution to FHA.}
Although we can sequentially combine the above two steps to solve the FHA problem (i.e., a two-step solution), the fixed intermediate domain (generated by the first step) may have large distributional discrepancy with the target domain. As a result, such two-step solution may not obtain a good target-domain classifier. To address this issue, we introduce a one-step solution TOHAN. The ablation study verifies that TOHAN outperforms such two-step solution (see ST+F and TOHAN in Table~\ref{ablation_study}).

The entire training procedure of TOHAN is shown in Algorithm~\ref{alg:algorithm}. Since the convergence speed of generator \emph{G} is relatively slow, the quality of generated data is poor at the beginning of the training process of \emph{G}. Thus, we will train the generator \emph{G} for a certain number of epochs before performing intermediate-to-target distributional adaptation (lines $2$ to $5$). When the generator can generate high-quality images, we train the generator and conduct adaptation together. 

We train every generator $G_n$ ($n=1,2,\dots,N$) separately, and we generate the intermediate domain data using the latest generators. Then, we pair the intermediate data with the target data and pre-train the group discriminator \emph{D} (lines $6$ to $8$). Next, we pair the intermediate data with target data and conduct the adaptation (lines $9$ to $12$). After conducting intermediate-to-target distributional adaptation, we obtain better \emph{$g_t$} and \emph{$h_t$}, i.e. classifying the target data more accurately. With the better target-domain classifier, we can make the generated intermediate data get closer to the target domain, in turn, these generated intermediate data further promote adaptation performance. 

\begin{table*}[!t]
\centering
\footnotesize
\setlength\tabcolsep{4.4pt}
\caption{Classification accuracy$\pm$standard deviation ($\%$) on $6$ digits FHA tasks. Bold value represents the highest accuracy on each column.}
\vspace{1mm}
  \begin{tabular}{  l  c  c  c  c  c  c  c  c  c  c  c  c}
  \toprule
   \multicolumn{1}{c}{\multirow{2}{*}{Tasks}} & \multicolumn{1}{c}{\multirow{2}{*}{WA}} & 
   \multicolumn{1}{c}{FHA} & \multicolumn{7}{c}{{Number of Target Data per Class}} \\
   \cline{4-10}
&  & Methods & {1} & {2}  & {3} & {4} & {5}  & {6} & {7} \\
\midrule
 \multirow{5}{*}{\emph{M}$\rightarrow$\emph{S}}   & \multirow{5}{*}{24.1} & {FT} &  26.7$\pm$1.0 & 26.8$\pm$2.1 & 26.8$\pm$1.6 & 27.0$\pm$0.7 & 27.3$\pm$1.2 & 27.5$\pm$0.8 & 28.3$\pm$1.5 \\
 &  & {SHOT} & 25.7$\pm$2.2 & 26.9$\pm$1.2 & 27.9$\pm$2.6 & 29.1$\pm$0.4 & 29.1$\pm$1.4 & 29.6$\pm$1.7 & 29.8$\pm$1.5 \\
 &  & {S+F} & 25.6$\pm$1.3 & 27.7$\pm$0.5 & 27.8$\pm$0.7 & 28.2$\pm$1.3 & 28.4$\pm$1.4 & 29.0$\pm$1.0 & 29.6$\pm$1.9 \\
 &  & {T+F} & 25.3$\pm$1.0 & 26.3$\pm$0.8 & 28.9$\pm$1.0 & 29.1$\pm$1.3 & 29.2$\pm$1.3 & 31.9$\pm$0.4 & 32.4$\pm$1.8 \\
 &  & {TOHAN} & \textbf{26.7$\pm$0.1} & \textbf{28.6$\pm$1.1} & \textbf{29.5$\pm$1.4} & \textbf{29.6$\pm$0.4} & \textbf{30.5$\pm$1.2} & \textbf{32.1$\pm$0.2} & \textbf{33.2$\pm$0.8} \\
\hline
 \multirow{5}{*}{\emph{S}$\rightarrow$\emph{M}}   & \multirow{5}{*}{70.2} & {FT} & 70.2$\pm$0.0 & 70.6$\pm$0.3 & 70.7$\pm$0.1 & 70.8$\pm$0.3 & 70.9$\pm$0.2 & 71.1$\pm$0.3 & 71.1$\pm$0.4 \\
 &  & {SHOT} & 72.6$\pm$1.9 & 73.6$\pm$2.0 & 74.1$\pm$0.6 & 74.6$\pm$1.2 & 74.9$\pm$0.7 & 75.4$\pm$0.3 & 76.1$\pm$1.5 \\
 &  & {S+F} & 74.4$\pm$1.5 & 83.1$\pm$0.7 & 83.3$\pm$1.1 & 85.9$\pm$0.5 & 86.0$\pm$1.2 & 87.6$\pm$2.6 & 89.1$\pm$1.0 \\
 &  & {T+F} & 74.2$\pm$1.8 & 81.6$\pm$4.0 & 83.4$\pm$0.8 & 82.0$\pm$2.3 & 86.2$\pm$0.7 & 87.2$\pm$0.8 & 88.2$\pm$0.6 \\
 &  & {TOHAN} & \textbf{76.0$\pm$1.9} & \textbf{83.3$\pm$0.3} & \textbf{84.2$\pm$0.4} & \textbf{86.5$\pm$1.1} & \textbf{87.1$\pm$1.3} & \textbf{88.0$\pm$0.5} & \textbf{89.7$\pm$0.5} \\
\hline
 \multirow{5}{*}{\emph{M}$\rightarrow$\emph{U}}   & \multirow{5}{*}{69.7} & {FT} & 74.4$\pm$0.7 & 76.7$\pm$1.9 & 76.9$\pm$2.2 & 77.3$\pm$1.1 & 77.6$\pm$1.4 & 78.3$\pm$2.1 & 78.3$\pm$1.6 \\
 &  & {SHOT} & 87.2$\pm$0.2 & 87.9$\pm$0.3 & 87.8$\pm$0.4 & 88.0$\pm$0.4 & 87.9$\pm$0.5 & 88.0$\pm$0.3 & 88.4$\pm$0.3 \\
 &  & {S+F} & 83.7$\pm$0.9 & 86.0$\pm$0.4 & 86.1$\pm$1.1 & 86.5$\pm$0.8 & 86.8$\pm$1.4 & 87.0$\pm$0.6 & 87.2$\pm$0.8 \\
 &  & {T+F} & 84.2$\pm$0.1 & 84.2$\pm$0.3 & 85.2$\pm$0.9 & 85.2$\pm$0.6 & 86.0$\pm$1.5 & 86.8$\pm$1.5 & 87.2$\pm$0.5 \\
 &  & {TOHAN} & \textbf{87.7$\pm$0.7} & \textbf{88.3$\pm$0.5} & \textbf{88.5$\pm$1.2} & \textbf{89.3$\pm$0.9} & \textbf{89.4$\pm$0.8} & \textbf{90.0$\pm$1.0} & \textbf{90.4$\pm$1.2} \\
\hline
 \multirow{5}{*}{\emph{U}$\rightarrow$\emph{M}}   & \multirow{5}{*}{82.9} & {FT} & 83.5$\pm$0.4 & 84.3$\pm$2.4 & 84.5$\pm$0.7 & 85.5$\pm$1.3 & 86.6$\pm$1.0 & 87.2$\pm$0.7 & 88.1$\pm$2.7 \\
 &  & {SHOT} & 83.1$\pm$0.5 & \textbf{85.5$\pm$0.3} & \textbf{85.8$\pm$0.6} & 86.0$\pm$0.2 & 86.6$\pm$0.2 & 86.7$\pm$0.2 & 87.0$\pm$0.1 \\
 &  & {S+F} & 83.2$\pm$0.2 & 84.0$\pm$0.3 & 85.0$\pm$1.2 & 85.6$\pm$0.5 & 85.7$\pm$0.6 & 86.2$\pm$0.6 & 87.2$\pm$1.1 \\
 &  & {T+F} & 82.9$\pm$0.7 & 83.9$\pm$0.2 & 84.7$\pm$0.8 & 85.4$\pm$0.6 & 85.6$\pm$0.7 & 86.3$\pm$0.9 & 86.6$\pm$0.7 \\
 &  & {TOHAN} & \textbf{84.0$\pm$0.5} & 85.2$\pm$0.3 & 85.6$\pm$0.7 & \textbf{86.5$\pm$0.5} & \textbf{87.3$\pm$0.6} & \textbf{88.2$\pm$0.7} & \textbf{89.2$\pm$0.5} \\
\hline
 \multirow{5}{*}{\emph{S}$\rightarrow$\emph{U}}   & \multirow{5}{*}{64.3} & {FT} & 64.9$\pm$1.1 & 66.5$\pm$1.5 & 66.7$\pm$1.7 & 67.3$\pm$1.1 & 68.1$\pm$2.3 & 68.3$\pm$0.5 & 69.7$\pm$1.4 \\
 &  & {SHOT} & 74.7$\pm$0.3 & 75.5$\pm$1.4 & 75.6$\pm$1.0 & 75.8$\pm$0.7 & 77.1$\pm$2.1 & 77.8$\pm$1.6 & 79.6$\pm$0.6 \\
 &  & {S+F} & 72.2$\pm$1.4 & 73.6$\pm$1.4 & 74.7$\pm$1.4 & 76.2$\pm$1.3 & 77.2$\pm$1.7 & 77.8$\pm$3.0 & 79.7$\pm$1.9 \\
 &  & {T+F} & 71.7$\pm$0.6 & 74.3$\pm$1.9 & 74.5$\pm$0.8 & 75.9$\pm$2.1 & 77.7$\pm$1.5 & 76.8$\pm$1.8 & 79.7$\pm$1.9 \\
 &  & {TOHAN} & \textbf{75.8$\pm$0.9} & \textbf{76.8$\pm$1.2} & \textbf{79.4$\pm$0.9} & \textbf{80.2$\pm$0.6} & \textbf{80.5$\pm$1.4} &\textbf{81.1$\pm$1.1} & \textbf{82.6$\pm$1.9} \\
\hline
 \multirow{5}{*}{\emph{U}$\rightarrow$\emph{S}}   & \multirow{5}{*}{17.3} & {FT} & 23.4$\pm$1.8 & 23.6$\pm$2.7 & 23.8$\pm$1.6 & 24.6$\pm$1.4 & 24.6$\pm$1.2 & 24.8$\pm$0.7 & 25.5$\pm$1.8 \\
 &  & {SHOT} & \textbf{30.3$\pm$1.2} & \textbf{31.6$\pm$0.4} & 29.8$\pm$0.5 & 29.4$\pm$0.3 & 29.7$\pm$0.5 & 29.8$\pm$0.8 & 30.1$\pm$0.9 \\
 &  & {S+F} & 28.1$\pm$1.2 & 28.7$\pm$1.3 & 29.0$\pm$1.2 & 30.1$\pm$1.1 & 30.3$\pm$1.3 & 30.7$\pm$1.0 & 30.9$\pm$1.5 \\
 &  & {T+F} & 27.5$\pm$1.4 & 27.9$\pm$0.9 & 28.4$\pm$1.3 & 29.4$\pm$1.8 & 29.5$\pm$0.7 & 30.2$\pm$1.0 & 30.4$\pm$1.7 \\
 &  & {TOHAN} & 29.9$\pm$1.2 & 30.5$\pm$1.2 & \textbf{31.4$\pm$1.1} & \textbf{32.8$\pm$0.9} & \textbf{33.1$\pm$1.0} & \textbf{34.0$\pm$1.0} & \textbf{35.1$\pm$1.8} \\
\bottomrule
\end{tabular}
\label{digits_results}
\end{table*}

\section{Experiments}
We compare TOHAN with benchmark solutions on five standard supervised DA datasets: \emph{MNIST}$\left(\emph{M}\right)$, \emph{SYHN}$\left(\emph{S}\right)$, \emph{USPS}$\left(\emph{U}\right)$, \emph{CIFAR}-$10\left(\emph{CF}\right)$, \emph{STL}-$10\left(\emph{SL}\right)$. We follow the standard domain-adaptation protocols \citep{DBLP:conf/iclr/ShuBNE18} and compare average accuracy of $5$ independent repeated experiments. For digital datasets (i.e., \emph{M}, \emph{S}, and \emph{U}), we choose the number of target data (per class) from $1$ to $7$ \cite{motiian2017few}. For objects datasets (i.e., \emph{CF} and \emph{SL}), we choose the number of target data as $10$.
Details regarding these datasets can be found in Appendix~\ref{Asec:datasets}. The code is available at \href{https://github.com/Haoang97/TOHAN}{github.com/Haoang97/TOHAN}. 

	

\paragraph{Benchmark solutions for FHA.}
Although the FHA is a new problem setting, we still design $5$ benchmark solutions to this new problem. (1) \textit{Without adaptation} (WA): to classify the target domain with the source classifier (encoder \emph{$g_s$} and classifier \emph{$h_s$}). (2) \textit{Fine-tuning} (FT): to train the \emph{classifier} \emph{$g_s$} with few owned target data. (3) \textit{SHOT}: a novel HTL method, where we modify it to use the labeled target data instead of only using the unlabeled target data. \citep{DBLP:conf/icml/LiangHF20}. (4) \textit{S+FADA} (S+F): to generate faked source data with the source classifier then apply them to DANN \citep{ganin2016domain}. (5) \textit{T+FADA} (T+F): to generate fake target data with few real target data then apply them to DANN. We demonstrate details of $5$ benchmark solutions in Appendix~\ref{Asec:BenSolution}. Experimental details can be found in Appendix~\ref{Asec:Details}. Moreover, we conduct additional experiments to compare existing HTL method named dkdHTL \citep{yu2020dynamic}, and the related results and analysis can be found in Appendix~\ref{Asec:htl}.

\begin{wrapfigure}{r}{0.6\textwidth}
	\centering
	\vspace{-18pt}
\subfigure[\tiny{$CF$ $\to$ $SL$}]{
\includegraphics[width=0.27\textwidth]{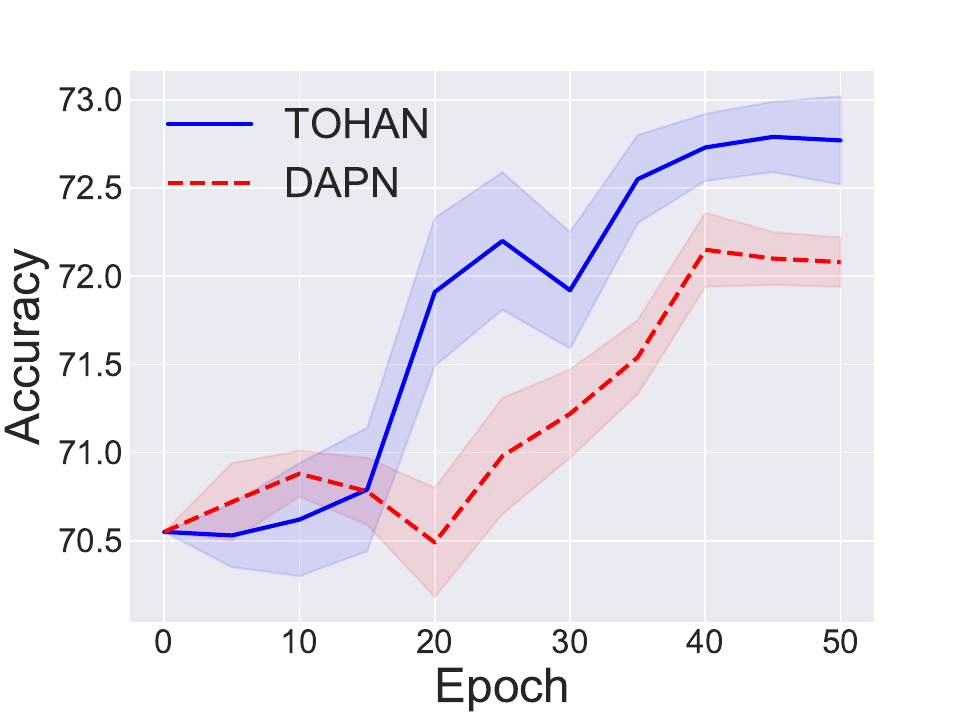}
}
\subfigure[\tiny{$SL$ $\to$ $CF$}]{
\includegraphics[width=0.27\textwidth]{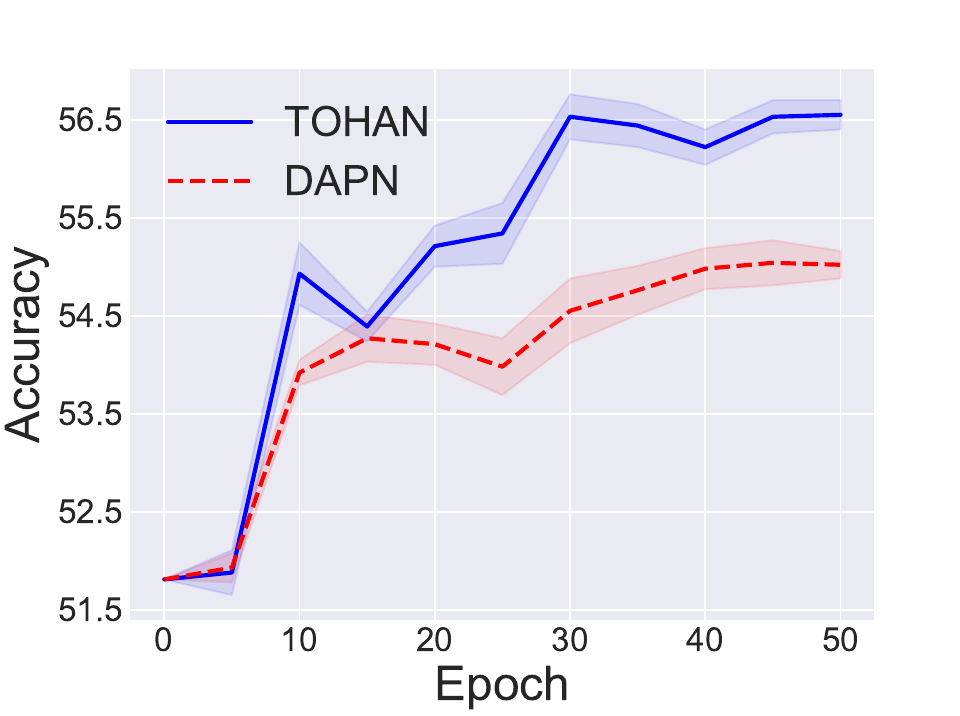}
}
\vspace{-10pt}
    \caption{TOHAN vs DAPN.}
\vspace{-15pt}
	\label{fig:few-shot}
\end{wrapfigure}


\paragraph{Results on digits FHA tasks.} We conduct experiments on $6$ digits FHA tasks: \emph{M}$\rightarrow$\emph{S}, \emph{S}$\rightarrow$\emph{M}, \emph{M}$\rightarrow$\emph{U}, \emph{U}$\rightarrow$\emph{M}, \emph{S}$\rightarrow$\emph{U} and \emph{U}$\rightarrow$\emph{S}. Table~\ref{digits_results} reports the target-domain classification accuracy of $6$ methods on $6$ digits FHA tasks. It is clear that TOHAN performs the best on almost every task. On \emph{M}$\rightarrow$\emph{S}, \emph{S}$\rightarrow$\emph{M}, \emph{M}$\rightarrow$\emph{U} and \emph{S}$\rightarrow$\emph{U}, TOHAN outperforms all benchmark solutions obviously. However, on the tasks \emph{U}$\rightarrow$\emph{M} and \emph{U}$\rightarrow$\emph{S}, the accuracy of TOHAN is slightly lower than SHOT when the amount of target data is too small ($n=1,2$). This abnormal phenomenon shows that TOHAN cannot generate intermediate domain data effectively with very little target data, especially when the resolution of source data is much smaller than that of target data. In this case, the data we generate is close to the source domain, so TOHAN degrades to S+FADA. 

In Appendix~\ref{Asec:AddAna}, we use t-SNE to visualize the features extracted by TOHAN and $5$ benchmark solutions on \emph{M}$\rightarrow$\emph{U} task (see Figure~\ref{fig:t-sne} in Appendix~\ref{Asec:AddAna}). When we use WA and FT methods, nearly all classes mix together. Although the classification accuracies of SHOT, S+F and T+F are relatively high, there are still some mixing among classes. For TOHAN, it can be seen that all classes are separated well, which demonstrates that TOHAN works well for solving the FHA problem.

\paragraph{Results on objects FHA tasks.} Following \cite{DBLP:conf/iclr/ShuBNE18}, we also evaluate TOHAN and benchmark solutions on $2$ objects FHA tasks: $SL$ $\rightarrow$ $CF$ and $CF$ $\rightarrow$ $SL$, and the results are shown in Table~\ref{tab:object}. Considering the complexity of datasets and the difficulty of our problem setting, we do not have amazing results like digits tasks. In $SL$ $\rightarrow$ $CF$, we achieve of $4.8\%$ improvement over WA and a performance accuracy of $56.9\%$. Note that because the numbers of pixels per image of $CF$ and $SL$ are quite different, the images from $SL$ lose a lot of information when inputted to the pre-trained model of $CF$, thus making the effects of TOHAN and benchmark solutions are not obvious for $CF$ $\rightarrow$ $SL$.

\paragraph{Comparing TOHAN with FSL methods.}
As mentioned above, FHA is a difficult case of FSL where the prior knowledge is a pre-trained model of another domain. To test the effectiveness of FSL methods in FHA, we compare TOHAN with a novel FSL method called \emph{domain-adaptive few-shot learning} (DAPN) \citep{zhao2020domain}. Note that we use the same pre-trained model in both TOHAN and DAPN. Taking $CF$ $\leftrightarrow$ $SL$ with five target data (per class) as an example, we solve FHA with TOHAN and DAPN and show the results in Figure~\ref{fig:few-shot}. It is clear that TOHAN outperforms DAPN when the training epoch ($t$) is relatively large.

\begin{wrapfigure}{r}{0.29\textwidth}
\vspace{-18pt}
  \begin{center}
   \includegraphics[width=0.25\textwidth]{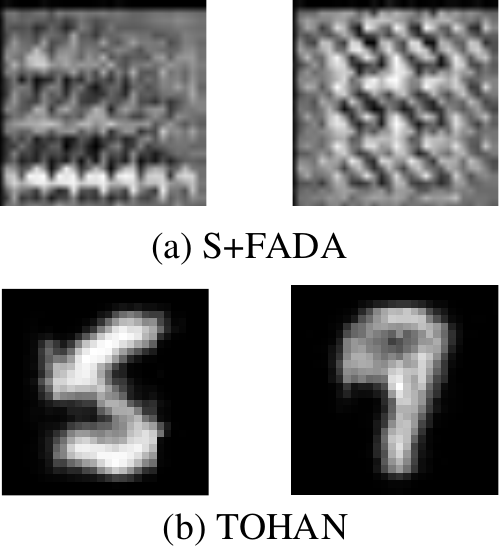}
  \end{center}
  \vspace{-11pt}
  \caption{Visualization of S+FADA and TOHAN.}
  \vspace{-20pt}
  \label{fig:visual}
\end{wrapfigure}

\paragraph{Ablation Study.}
Finally, we study the advantages of one-step method over other two-step methods. We consider the following baselines: S+F, T+F and \emph{ST+FADA} (ST+F). We have explained S+F and T+F previously. ST+F denotes the two-step version of TOHAN, i.e., to conduct intermediate  domain  generation and intermediate-to-target  distributional  adaptation separately. We make ablation study on three digital datasets mentioned before as an example.

As shown in Table~\ref{ablation_study}, it is clear that TOHAN works better than the other baselines. The generator of S+F uses the loss $\mathcal{L}_{G_n}^{s}$, which merely contains knowledge from the source domain. The generator of T+F uses the loss $\mathcal{L}_{G_n}^{t}$ and ignores the knowledge contained in the source-domain classifier. In contrast, TOHAN uses both $\mathcal{L}_{G_n}^{s}$ and $\mathcal{L}_{G_n}^{t}$. As a result, TOHAN achieves higher accuracy than S+F and T+F. Besides, the generators and classifiers in TOHAN promote each other in the training process, which results in that TOHAN performs better than the ST+F. In Figure~\ref{fig:visual}, we visualize the data generated by S+FADA and TOHAN. It is clear that data generated by S+FADA are chaotic that contain little useful information. However, data generated by TOHAN contain many target-domain high-level visual features, and they can be classified by the source classifier accurately, resulting in a better performance in FHA. The detailed analysis of ablation study can be found in Appendix~\ref{Asec:AddAna}.

\begin{table}[!t]
  \centering
  \small
  \setlength{\belowcaptionskip}{1mm}
  \caption{Classification accuracy$\pm$standard deviation ($\%$) on 2 objects FHA tasks: CIFAR-$10\to$ STL-$10$ (\textit{CF}$\to$\textit{SL}) and STL-$10\to$ CIFAR-$10$ (\textit{SL}$\to$\textit{CF}). Bold value represents the highest accuracy ($\%$) among TOHAN and benchmark solutions.}
  \vspace{1mm}
    \begin{tabular}{ccccccc}
    \toprule
       Methods   & WA    & FT & SHOT  & S+F   & T+F   & TOHAN \\
    \midrule
    \textit{CF}$\to$\textit{SL} & 70.6 & 71.5$\pm$1.0 & 71.9$\pm$0.4 & 72.1$\pm$0.4 & 71.3$\pm$0.5 & \textbf{72.8$\pm$0.1} \\
    \midrule
    \textit{SL}$\to$\textit{CF} & 51.8 & 54.3$\pm$0.5 & 53.9$\pm$0.2 & \textbf{56.9$\pm$0.5} & 55.8$\pm$0.8 & 56.6$\pm$0.3 \\
    \bottomrule
    \end{tabular}%
    
  \label{tab:object}%
   \vspace{-1em}
\end{table}%

\begin{table}[!t]
	\centering
   	\small
	\setlength{\tabcolsep}{3.5mm}
	\caption{Ablation study. We show the average accuracy of the $6$ tasks on digits datasets in this table. Bold value represents the highest accuracy ($\%$) on each column. See full results in Appendix~\ref{Asec:AddAna}.}\label{ablation_study}
	\vspace{1mm}
	\begin{tabular}{  l  c  c  c  c  c  c  c }
		\toprule
		 FHA & \multicolumn{7}{c}{{Number of Target Data per Class}} \\
		\cline{2-8}
		 Methods & {1} & {2}  & {3} & {4} & {5}  & {6} & {7} \\
		\midrule
		 {S+F} & 61.2 & 63.0 & 64.3 & 65.4 & 65.7 & 66.4 & 67.2 \\
		  {T+F}& 61.0 & 63.0 & 64.2 & 64.5 & 65.7 & 66.5 & 67.4 \\
		  {ST+F} & 61.8 & 64.5 & 64.9 & 65.8 & 66.5 & 67.3 & 68.4 \\
		  {TOHAN} & \textbf{63.3} & \textbf{65.4} & \textbf{66.4} & \textbf{67.5} & \textbf{68.0} & \textbf{68.9} & \textbf{70.0}  \\
		 \bottomrule
	\end{tabular}
	
\end{table}

\paragraph{Verification of No Source-data Leakage in Intermediate Domain.}
As a key contribution, TOHAN solves FHA through generating intermediate data. To guarantee that no source data are leaked, we need to verify that there is no source-domain features in the intermediate data. We determine this by calculating the PSNR values \cite{DBLP:conf/icpr/HoreZ10,DBLP:conf/cvpr/YinMVAKM21} between each intermediate sample and all source samples. PSNR indicates the generation quality of an image $f$ given a standard image $g$, and is defined as
\begin{align*}
    \label{eq:psnr}
    \textnormal{PSNR}(f,g)=10\log_{10}\frac{255^2}{\textnormal{MSE}(f,g)},\ {\rm where}\ \textnormal{MSE}(f,g)=\frac{1}{MN}\sum_{i=1}^{M}\sum_{j=1}^{N}(f_{ij}-g_{ij})^2.
\end{align*}

\begin{wrapfigure}{r}{0.5\textwidth}
    \vspace{-1em}
    \centering
    \includegraphics[width=0.5\textwidth]{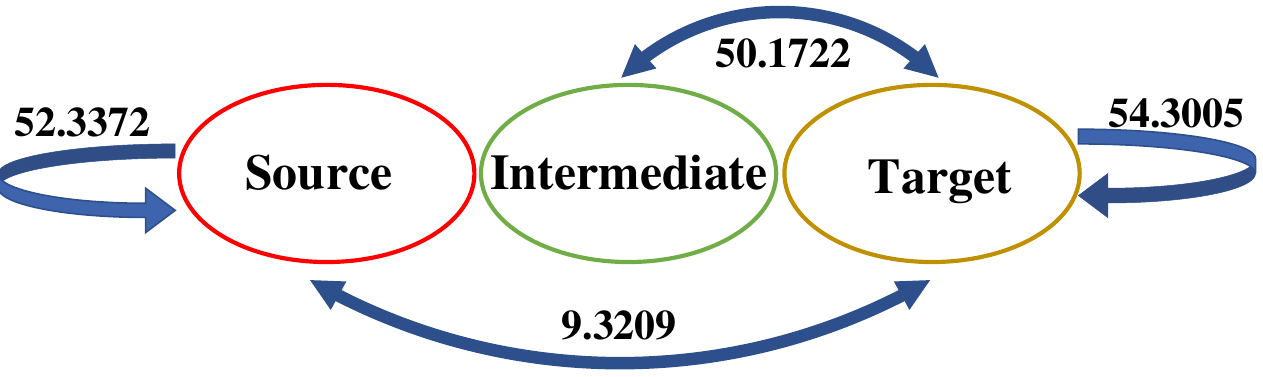}
	\caption{The average PSNR values of (source data, source data), (target data, target data), (source data, target data), and (intermediate data, target data).}
	\label{fig: psnr}
 	\vspace{-1em}
\end{wrapfigure}

The larger PSNR value indicates the two images are more similar. Therefore, taking \emph{M}$\to$\emph{S} as an example, we report the top-$5$ largest PSNR values in Table~\ref{tab:5-psnr}. That is, we check whether the worst case satisfies our claim. For comparison, we also compute the PSNR values of (source data, source data), (target data, target data), (source data, target data), and (intermediate data, target data), and we report the \emph{average} PSNR values of the above four cases in Figure~\ref{fig: psnr}. As can be seen, the intermediate data are much closer to the target data, and they are very different from the source data. The average PSNR between source data and target data is $9.3209$. The top-$5$ largest PSNR values between each intermediate data and all source data ($\approx17.89$) are obviously smaller than $50.1722$ (the average PSNR between intermediate data and target data). Through this result, we can state that intermediate data are similar to the target data and very different from the source data. Therefore, the above evidence shows that the generated intermediate data contain no source domain features, and the source data do not leak when generating the intermediate data.

\begin{table}[!t]
	\centering
   	\small
	\caption{The top-5 largest PSNR values between intermediate samples and all source samples.}
	\vspace{1mm}
	\label{tab:5-psnr}
	\begin{tabular}{c  c  c  c  c  c}
	    \hline
	    Ranking & $1$ & $2$ & $3$ & $4$ & $5$ \\
		\hline
		PSNR value & 17.8951 & 17.8948 & 17.8948 & 17.8947 & 17.8947 \\
		\hline
	\end{tabular}
 	\vspace{-1em}
\end{table}

\section{Conclusion}
This paper presents a very challenging problem setting called \emph{few-shot hypothesis adaptation} (FHA), which trains a target-domain classifier with only few labeled target data and a well-trained source-domain classifier. Since we can only access a well-trained source-domain classifier in FHA, the private information in the source domain are protected well. To this end, we propose a novel one-step FHA method, called \emph{target-oriented hypothesis adaptation network} (TOHAN). Experiments conducted on $8$ FHA tasks confirm that TOHAN effectively adapts the source-domain classifier to the target domain and outperforms competitive benchmark solutions to the FHA problem. 
\begin{ack}
This work was partially supported by the National Natural Science Foundation of China (No.  91948303-1, No. 61803375, No. 12002380, No. 62106278, No. 62101575, No. 61906210), the National Grand R\&D Plan (Grant No. 2020AAA0103501), and the National Key R$\&$D Program of China (2021ZD0140301). FL would also like to thank Dr. Yanbin Liu for productive discussions.
\end{ack}

\bibliography{example_paper}
\bibliographystyle{plain}

\clearpage

\appendix
\onecolumn

\section{Related Work}
In this section, we briefly review \emph{few-shot learning} (FSL) and two  domain adaptation settings related to the FHA problem, which include FDA, and \emph{source-data-free UDA} (SFUDA).

    

\textbf{Few-shot Learning.}
Existing FSL methods can be divided into three categories: (1) Augmenting training data set by prior knowledge. Data augmentation via hand-crafted rules serves as pre-processing in FSL methods. For instance, we can use reflection \cite{DBLP:conf/iclr/EdwardsS17}; and (2) Constraining hypothesis space by prior knowledge  \cite{DBLP:conf/iclr/MishraR0A18}; and (3) Altering search strategy in hypothesis space by prior knowledge. For instance, we can use early-stopping \cite{DBLP:conf/nips/ArikCPPZ18}. Note that our method belongs to category (1). However, the prior knowledge we have is more difficult to leverage than the prior knowledge that FSL methods have.

\textbf{Few-shot Domain Adaptation.}
With the development of FSL, researchers also apply ideas of FSL into domain adaptation, called \emph{few-shot domain adaptation} (FDA). 
FADA \citep{motiian2017few} is a representative FDA method, which pairs data from source domain and data from target domain and then follows the adversarial domain adaptation method. Casual mechanism transfer \citep{DBLP:conf/icml/TeshimaSS20} is another novel FDA method dealing with a meta-distributional scenario, in which the data generating mechanism is invariant among domains. Nevertheless, FDA methods still need to access many labeled source data for training, which may cause the private-information leakage of the source domain. 

\textbf{Hypothesis Transfer Learning.}
In the \emph{hypothesis transfer learning} (HTL), we can only access a well-trained source-domain classifier and small labeled or abundant unlabeled target data. \cite{DBLP:conf/icml/KuzborskijO13} requires small labeled target data and uses the Leave-One-Out error find the optimal transfer parameters. Later, SHOT \cite{DBLP:conf/icml/LiangHF20} is proposed to solve the HTL with many unlabeled target data by freezing the source-domain classifier and learning a target-specific feature extraction module. \cite{hou2020source} proposes an image translation method that transfers the style of target images to that of unseen source images. 
As for the universal setting, a two-stage learning process \citep{kundu2020universal} has been proposed to address the HTL problem. Compared with FHA, HTL still requires at least small target data (e.g., at least $12$ samples in binary classification problem \cite{DBLP:conf/icml/KuzborskijO13}, or at least two of labeling percentage \cite{DBLP:conf/cvpr/AhmedLPR20}). In FHA, we focus on a more challenging situation: only few data (e.g., one sample per class) are available.

\section{Proof of Theorem~\ref{thm:1}}
\label{Asec:Thm}
We state here two known generalization bounds \cite{book} used in our proof.

\begin{lemma}
\label{lemma:1}
Suppose that $\mathcal{H}$ is a set of functions from $\mathcal{X}$ to $\{0,1\}$ with finite $VC$-dimension $V\ge1$. For any distribution $P$ over $\mathcal{X}$, any target function, and any $\epsilon$, $\delta>0$, if we draw a set of data from $P$ of size
\begin{equation}\nonumber
    m(\epsilon,\delta,V)=\frac{64}{\epsilon^2}\left(2V\ln\left(\frac{12}{\epsilon}\right)+\ln\left(\frac{4}{\delta}\right)\right),
\end{equation}
then with probability at least $1-\delta$, we have $|err(h)-\widehat{err}(h)|\le\epsilon$ for all $h\in\mathcal{H}$.
\end{lemma}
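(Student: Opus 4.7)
The plan is to prove this classical VC uniform convergence bound by the standard three-move argument: symmetrization with a ghost sample, reduction to a finite hypothesis class via the growth function (Sauer's lemma), and a concentration bound on each fixed hypothesis combined with a union bound. Solving for $m$ at the end will give the stated sample complexity.

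First I would introduce an independent ghost sample $S'=\{x'_1,\dots,x'_m\}$ drawn i.i.d.\ from $P$, and apply the symmetrization lemma: for $m \ge 2/\epsilon^2$ one has
\begin{equation}\nonumber
\Pr\Bigl[\sup_{h\in\mathcal{H}}|err(h)-\widehat{err}_S(h)|>\epsilon\Bigr]
\;\le\; 2\,\Pr\Bigl[\sup_{h\in\mathcal{H}}|\widehat{err}_{S}(h)-\widehat{err}_{S'}(h)|>\tfrac{\epsilon}{2}\Bigr].
\end{equation}
This is standard and follows from Chebyshev applied to $\widehat{err}_{S'}(h)-err(h)$ for the worst $h$. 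Next, conditional on the union $T=S\cup S'$ of $2m$ points, the restriction $\mathcal{H}|_T$ has cardinality at most the growth function $\Pi_{\mathcal{H}}(2m)$, which by Sauer's lemma is bounded by $(2em/V)^V$ whenever $2m\ge V$. Then I would apply a random-permutation (swapping) argument: conditional on $T$, each of the $2m$ points is equally likely to land in $S$ or $S'$, so for any fixed $h\in\mathcal{H}|_T$, the quantity $\widehat{err}_{S'}(h)-\widehat{err}_{S}(h)$ is a signed sum of Rademacher-type variables, and Hoeffding's inequality yields
\begin{equation}\nonumber
\Pr\Bigl[|\widehat{err}_{S}(h)-\widehat{err}_{S'}(h)|>\tfrac{\epsilon}{2}\,\Big|\,T\Bigr]\;\le\; 2\exp(-m\epsilon^2/8).
\end{equation}

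A union bound over the at most $(2em/V)^V$ distinct $h|_T$ and integration over $T$ gives
\begin{equation}\nonumber
\Pr\Bigl[\sup_{h\in\mathcal{H}}|err(h)-\widehat{err}_S(h)|>\epsilon\Bigr]
\;\le\; 4\,\bigl(2em/V\bigr)^V\exp\!\bigl(-m\epsilon^2/32\bigr).
\end{equation}
I would then require the right-hand side to be at most $\delta$, i.e.\ $m\epsilon^2/32 \ge V\ln(2em/V)+\ln(4/\delta)$, and verify that the stated choice $m=\tfrac{64}{\epsilon^2}\bigl(2V\ln(12/\epsilon)+\ln(4/\delta)\bigr)$ satisfies this inequality for all $V\ge1$ and $\epsilon\in(0,1]$.

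The main obstacle will be the last step: pinning down the constants $64$ and $12/\epsilon$ so that the implicit inequality closes. In particular, I will need to bound $V\ln(2em/V)$ by $2V\ln(12/\epsilon)$ after substituting the candidate $m$, which requires showing $2em/V \le (12/\epsilon)^2$, i.e.\ $m \le \tfrac{72V}{e\epsilon^2}$—this comparison is actually reversed for large $\ln(4/\delta)$, so one must instead split the exponent as $m\epsilon^2/32 = m\epsilon^2/64 + m\epsilon^2/64$, absorb $\ln(4/\delta)$ with one half, and use the other half together with the inequality $\ln x \le x^{1/2}$ (or a direct algebraic manipulation on $\ln(2em/V)$) to absorb the $V$-dependent term. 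Tracking this constant juggling without losing a factor is the only non-routine piece; everything else is bookkeeping from Sauer's lemma and Hoeffding.
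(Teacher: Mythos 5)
The paper contains no proof of Lemma~\ref{lemma:1} to compare against: it imports the lemma verbatim (together with Lemma~\ref{lemma:2}) as a known generalization bound from the textbook it cites, and uses it as a black box in the proof of Theorem~\ref{thm:1}. Your proposal is the classical proof of exactly that textbook statement --- ghost-sample symmetrization, reduction to the growth function via Sauer's lemma, a swapping/Hoeffding bound per fixed hypothesis, and a union bound --- so in substance you are reconstructing the proof that lives in the paper's reference rather than diverging from anything in the paper. One pedantic point: since $err(h)=\Pr_{x\sim P}[h(x)\neq c^*(x)]$, the union bound should formally run over the error-indicator class $\{x\mapsto \mathbbm{1}[h(x)\neq c^*(x)]:h\in\mathcal{H}\}$, but XOR-ing with a fixed target function preserves the growth function, so Sauer applies unchanged.

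The one place your sketch is genuinely shaky is the endgame with constants, and your own diagnosis of the fix does not quite work. Your chain actually yields the tail $4\,\Pi_{\mathcal{H}}(2m)\,e^{-m\epsilon^{2}/8}$ (symmetrization costs a factor $2$ at radius $\epsilon/2$; Hoeffding for a fixed $h$ gives $2e^{-m\epsilon^{2}/8}$), yet you record only $e^{-m\epsilon^{2}/32}$. That weaker bound is still true, but with $/32$ the ``split the exponent in halves'' device you propose fails in some regimes: absorbing $\ln(4/\delta)$ with one half of $m\epsilon^{2}/32$ leaves the requirement $2V\ln(12/\epsilon)+\ln(4/\delta)\ge V\ln(2em/V)$, which after substituting the stated $m$ reduces to $L\ge \ln(128e/144)+\ln\bigl(2\ln(12/\epsilon)+L\bigr)$ with $L=\ln(4/\delta)/V$, and this is false, e.g., for $\epsilon$ near $1$ and $L$ near $0$ (large $V$, moderate $\delta$). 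What does close --- barely --- is the unsplit inequality $4V\ln(12/\epsilon)+\ln(4/\delta)\ge V\ln(2em/V)$, verified via $\ln x\le x-1$ with a margin of only about $0.1$ at $\epsilon=1$; alternatively, keep the exponent $/8$ your argument actually delivers, and the verification becomes routine with a factor-$4$ slack. Finally, record the side conditions $m\ge 2/\epsilon^{2}$ (symmetrization) and $2m\ge V$ (Sauer); both are immediate from the stated $m$, since $m\ge (128\ln 12)\,V/\epsilon^{2}$.
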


\begin{lemma}
\label{lemma:2}
Suppose that $\mathcal{H}$ is a set of functions from $\mathcal{X}$ to $\{0,1\}$ with finite $VC$-dimension $V\ge1$. For any probability distribution $P$ over $\mathcal{X}$, any target function $c^*$, we have
\begin{equation}\nonumber
    \bm{Pr}\left[\sup\limits_{h\in\mathcal{H},\widehat{err}(h)=0}|err(h)-\widehat{err}(h)\ge\epsilon|\right]\le2\mathcal{H}[2m,P]e^{-m\epsilon/2}.
\end{equation}
So, for any $\epsilon$, $\delta>0$, if we draw a set of data from $P$ of size
\begin{equation}\nonumber
    m\ge\frac{2}{\epsilon}\left(2\ln(\mathcal{H}[2m,P])+\ln\left(\frac{2}{\delta}\right)\right),
\end{equation}
then with probability at least $1-\delta$, we have that all functions with $\widehat{err}(h)=0$ satisfy
\begin{equation}\nonumber
    err(h)\le\epsilon.
\end{equation}
\end{lemma}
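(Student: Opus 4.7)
The plan is to establish the probability bound by the classical double-sample (symmetrization) argument together with the swap/permutation trick, and then derive the sample-complexity statement by setting the right-hand side equal to $\delta$. Because $\widehat{err}(h)=0$ on the event inside the supremum, the quantity $|err(h)-\widehat{err}(h)|\ge\epsilon$ reduces to the one-sided statement that there exists some $h\in\mathcal{H}$ with $\widehat{err}_S(h)=0$ and $err(h)\ge\epsilon$, so only upper-tail control is needed.

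First, I would introduce a ghost sample $S'$ of size $m$ drawn independently from $P$ and establish the symmetrization inequality
\begin{equation*}
\Pr_{S}\bigl[\exists h\in\mathcal{H}:\widehat{err}_S(h)=0,\ err(h)\ge\epsilon\bigr]
\le 2\Pr_{S,S'}\bigl[\exists h\in\mathcal{H}:\widehat{err}_S(h)=0,\ \widehat{err}_{S'}(h)\ge \epsilon/2\bigr].
\end{equation*}
For any fixed $h$ with $err(h)\ge\epsilon$, a Chernoff bound yields $\Pr_{S'}[\widehat{err}_{S'}(h)\ge\epsilon/2]\ge 1/2$ once $m$ is moderately large, and a standard argument lifts this pointwise statement to the event-level form above by conditioning on a witness hypothesis.

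Next comes the key swap step. I would condition on the multiset $T=S\cup S'$ of size $2m$; conditionally on $T$, the pair $(S,S')$ is a uniformly random partition of $T$ into two halves of size $m$. For a fixed $h$ that errs on $k$ points of $T$, the probability (over the random partition) that all $k$ errors land in $S'$ is at most $2^{-k}$, and the co-occurrence of $\widehat{err}_S(h)=0$ with $\widehat{err}_{S'}(h)\ge\epsilon/2$ forces $k\ge m\epsilon/2$. Hence, for each dichotomy that $\mathcal{H}$ induces on $T$, the conditional probability of the bad event is at most $2^{-m\epsilon/2}\le e^{-m\epsilon/2}$. Taking a union bound over the at most $\mathcal{H}[2m,P]$ distinct such dichotomies (by definition of the growth function) and multiplying by the factor $2$ from symmetrization gives the claimed bound $2\mathcal{H}[2m,P]\,e^{-m\epsilon/2}$.

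For the sample-complexity conclusion, I would set $2\mathcal{H}[2m,P]\,e^{-m\epsilon/2}\le\delta$ and rearrange to $m\ge(2/\epsilon)\ln(2\mathcal{H}[2m,P]/\delta)$; this is implied by the stated (slightly looser) condition $m\ge(2/\epsilon)\bigl(2\ln\mathcal{H}[2m,P]+\ln(2/\delta)\bigr)$, after which $err(h)\le\epsilon$ for every $h\in\mathcal{H}$ with $\widehat{err}(h)=0$ holds with probability at least $1-\delta$. I expect the main obstacle to be the symmetrization step: the pointwise inequality $\Pr_{S'}[\widehat{err}_{S'}(h)\ge\epsilon/2]\ge 1/2$ is only informative once $m\epsilon$ exceeds an absolute constant, so some care is required either to restrict to that regime or to absorb the trivial low-$m$ regime into the growth-function factor. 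The swap step and the union bound over dichotomies are then essentially mechanical.
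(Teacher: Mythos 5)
The first thing to note is that the paper itself does not prove this lemma: it is imported as a known generalization bound (cited to a textbook) in Appendix~\ref{Asec:Thm} and used as a black box inside the proof of Theorem~\ref{thm:1}, so there is no internal proof to compare against. Your double-sample symmetrization-plus-swap argument is the canonical derivation of exactly this kind of statement, and its overall architecture is sound: the one-sided reduction in the realizable case, the ghost sample $S'$, conditioning on the combined multiset $T$, the per-dichotomy bound via the hypergeometric estimate $\binom{2m-k}{m}/\binom{2m}{m}\le 2^{-k}$ with $k\ge m\epsilon/2$, the union bound, and the inversion to obtain the sample-complexity form. Your caveat that symmetrization is only informative once $m\epsilon$ exceeds an absolute constant is also correctly placed.

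Two details, however, do not survive scrutiny as written. (a) The inequality $2^{-m\epsilon/2}\le e^{-m\epsilon/2}$ is reversed: for $x>0$ one has $2^{-x}=e^{-x\ln 2}>e^{-x}$. Your argument therefore yields the weaker bound $2\mathcal{H}[2m,P]\,2^{-m\epsilon/2}$ rather than the stated $2\mathcal{H}[2m,P]\,e^{-m\epsilon/2}$, and consequently the displayed sample size is not verified with its exact constants: solving $2\mathcal{H}[2m,P]\,2^{-m\epsilon/2}\le\delta$ requires $m\ge\frac{2}{\epsilon\ln 2}\left(\ln(2\mathcal{H}[2m,P])+\ln\frac{1}{\delta}\right)$, and while the stated condition's extra factor $2$ on $\ln(\mathcal{H}[2m,P])$ absorbs the $1/\ln 2$ loss on that term, it does not do so on the $\ln(2/\delta)$ term. (b) In this paper $\mathcal{H}[2m,P]$ denotes the \emph{expected} number of splits of $2m$ points drawn from $P$ (see the gloss under Theorem~\ref{thm:1}), not a worst-case growth function, so your union bound ``over the at most $\mathcal{H}[2m,P]$ distinct dichotomies, by definition of the growth function'' is not literally available: conditionally on $T$ the number of dichotomies is the random quantity $\Pi_{\mathcal{H}}(T)$, and the correct step is to bound the conditional probability by $\Pi_{\mathcal{H}}(T)\,2^{-m\epsilon/2}$ and then take expectation over $T\sim P^{2m}$, which produces the expectation $\mathcal{H}[2m,P]$ exactly; bounding instead by a deterministic maximum proves a strictly weaker statement than the one claimed. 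Both repairs are local and mechanical, and with them your proof is the standard, correct route to this lemma.
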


Now we begin the proof of Theorem~\ref{thm:1}.
\begin{proof}
Let $S$ be the set of $m_u$ unlabeled data. By standard VC-dimension bounds (e.g., Lemma~\ref{lemma:1}), the number of unlabeled data given is sufficient to ensure that with probability at least $1-\frac{\delta}{2}$ we have
\begin{equation}\nonumber
    |\bm{Pr}_{x\sim\widebar{S}}[\chi_h(x)=1]-\bm{Pr}_{x\sim P}[\chi_h(x)=1]|\le\epsilon\quad \text{for all }\chi_h\in\chi(\mathcal{H}),
\end{equation}
where $\widebar{S}$ denotes the uniform distribution over $S$.

Since $\chi_h(x)=\chi(h,x)$, this implies that we have
\begin{equation}\nonumber
    |\chi(h,D)-\hat{\chi}(h,S)|\le\epsilon\quad \text{for all }h\in\mathcal{H}.
\end{equation}
Therefore, the set of hypotheses with $\hat{\chi}(h,S)\ge1-t-\epsilon$ is contained in $\mathcal{H}_{P,\chi}(t+2\epsilon)$.

The bound on the number of labeled data now follows directly from known concentration results using the expected number of partitions instead of the maximum in the standard VC-dimension bounds (e.g., Lemma~\ref{lemma:2}). This bound ensures that with probability $1-\frac{\delta}{2}$, none of the functions $h\in\mathcal{H}_{P,\chi}(t+2\epsilon)$ with $err(h)\ge\epsilon$ have $\widehat{err}(h)=0$.

The above two arguments together imply that with probability $1-\delta$, all $h\in\mathcal{H}$ with $\widehat{err}(h)=0$ and $\hat{\chi}(h,S)\ge1-t-\epsilon$ have $err(h)\ge\epsilon$, and furthermore $c^*$ has $\hat{\chi}(c^*,S)\ge1-t-\epsilon$. This in turn implies that with probability at least $1-\delta$, we have $err(\hat{h})\le\epsilon$, where
\begin{equation}\nonumber
    \hat{h}=\mathop{\arg\max}_{h\in\mathcal{H}_0}\hat{\chi}(h,S).
\end{equation}

\end{proof}

\section{Datasets}
\label{Asec:datasets}
\paragraph{Digits.} Following the evaluation protocol of \cite{motiian2017few}, we conduct experiments on $6$ adaptation scenarios: \emph{M}$\rightarrow$\emph{S}, \emph{S}$\rightarrow$\emph{M}, \emph{M}$\rightarrow$\emph{U}, \emph{U}$\rightarrow$\emph{M}, \emph{S}$\rightarrow$\emph{U} and \emph{U}$\rightarrow$\emph{S}. MNIST \citep{lecun1998gradient} images have been size-normalized and centered in a fixed-size ($28\times 28$) image. USPS \citep{hull1994database} images are $16\times 16$ grayscale pixels. SVHN \citep{netzer2011reading} images are $32\times 32$ pixels with $3$ channels.

\paragraph{Objects.} We also evaluate TOHAN and benchmark solutions on CIFAR-$10$ \citep{krizhevsky2009learning} and STL-10 \citep{coates2011stl10}, following \cite{DBLP:conf/iclr/ShuBNE18}. The CIFAR-10 dataset contains $60,000$ $32\times32$ color images in 10 categories. The STL-10 dataset is inspired by the CIFAR-10 dataset but with some modifications. However, these two datasets only contain nine overlapping classes. We removed the non-overlapping classes (“frog” and “monkey”) \citep{DBLP:conf/iclr/ShuBNE18}.

\begin{figure*}[!t]
	\centering
	\subfigure[Fine-tuning]{
		\includegraphics[width=0.21\textwidth]{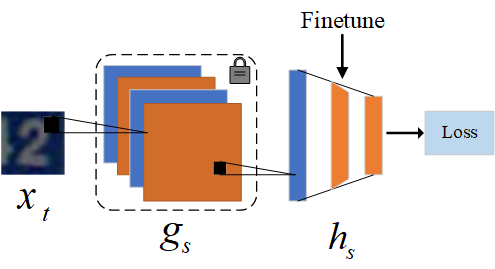} 
		\label{fig:baseline1}
	}
	\subfigure[SHOT]{
   	\includegraphics[width=0.21\textwidth]{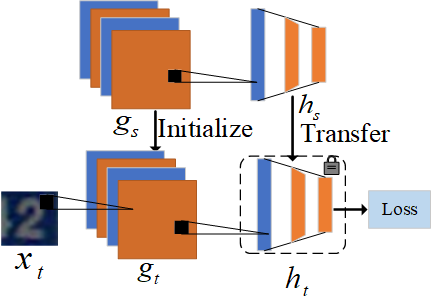}
		\label{fig:baseline2}
    }
	\subfigure[S+FADA]{

		\includegraphics[width=0.23\textwidth]{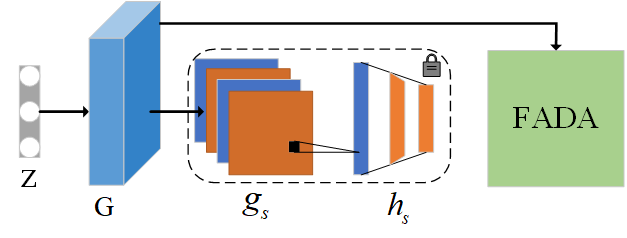} 
		\label{fig:baseline3}
	}
    	\subfigure[T+FADA]{
    	\includegraphics[width=0.21\textwidth]{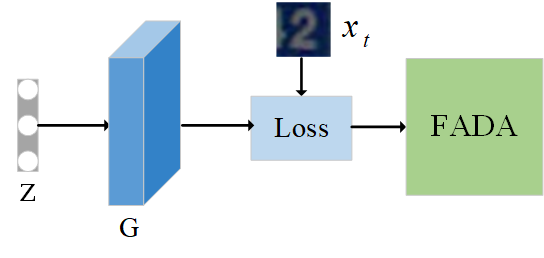}
		\label{fig:baseline4}
    	}
	\caption{Overview of benchmark solutions to the FHA problem. (a) We freeze the source encoder \emph{$g_s$} and train the source classifier \emph{$h_s$} with the target data $D_t$. (b) We first train source encoder \emph{$g_s$} and classifier \emph{$h_s$} , then we transfer them to the target domain. We generate pseudo labels for target data, then we use them to train the target model with classifier \emph{$h_t$} freezed. (c) We generate some source-like data under the guidance of the source classifier, then we combine them with FADA. (d) We generate some data close to target domain, i.e. decreasing the distance between data and target domain, then we combine them with FADA.}
	\label{fig:baseline}
	\vspace{-1em}
\end{figure*}

\section{Benchmark Solutions for FHA}
\label{Asec:BenSolution}
To solve the FHA problem, this section presents $5$ benchmark solutions that directly combine existing techniques used in the deep learning and domain adaptation fields.

\paragraph{Without adaptation.} Since we have a source-domain classifier, we can directly use it to classify the target data, which is a frustrating solution to the FHA problem.

\paragraph{Fine-tuning.} See Figure~\ref{fig:baseline1}. Fine-tuning is a basic solution to the FHA problem. We freeze the source encoder \emph{$g_s$} and train the source classifier \emph{$h_s$} with the target data $D_t$. In this way, knowledge about target domain is filled into source hypothesis. 


\paragraph{SHOT.} See Figure~\ref{fig:baseline2}. SHOT is a novel method for source hypothesis transfer \citep{DBLP:conf/icml/LiangHF20}. It learns the optimal target-specific feature learning module to fit the source hypothesis with only the source classifier. We first train source encoder \emph{$g_s$} and classifier \emph{$h_s$} , and then we transfer them to the target domain. SHOT is an UDA method. Thus, we generate pseudo labels for target data, and then we use them to train the target model with classifier \emph{$h_t$} freezed. Although SHOT is suitable for our FHA problem, it requires a lot of target data, which is an obstacle for FHA.

\paragraph{S+FADA.} See Figure~\ref{fig:baseline3}. As mentioned in Figure~\ref{fig:intro}, a straightforward solution to the FHA problem is a two-step approach. We can train a source-data generator \emph{G} under the guidance of source hypothesis, and then we use it to generate source data. First, we input Gaussian random noise \emph{z} to \emph{G}, then \emph{G} outputs various disordered data. Second, these data is inputted into \emph{$g_s$}$\circ$\emph{$h_s$}, and then \emph{$h_s$} outputs the probability of $G(z)$ belonging to each class. Third, if we would like to generate data belonging to $n^{th}$ class, we should optimize $G$ to push the probability of $G(z)$ belonging to $n$ near to $1$. Finally, we can apply the restored source data into an adversarial DA method to train a target domain classifier \emph{$h_t$}. 

\begin{figure}[!tp]
     \centering
     \includegraphics[width=0.7\textwidth]{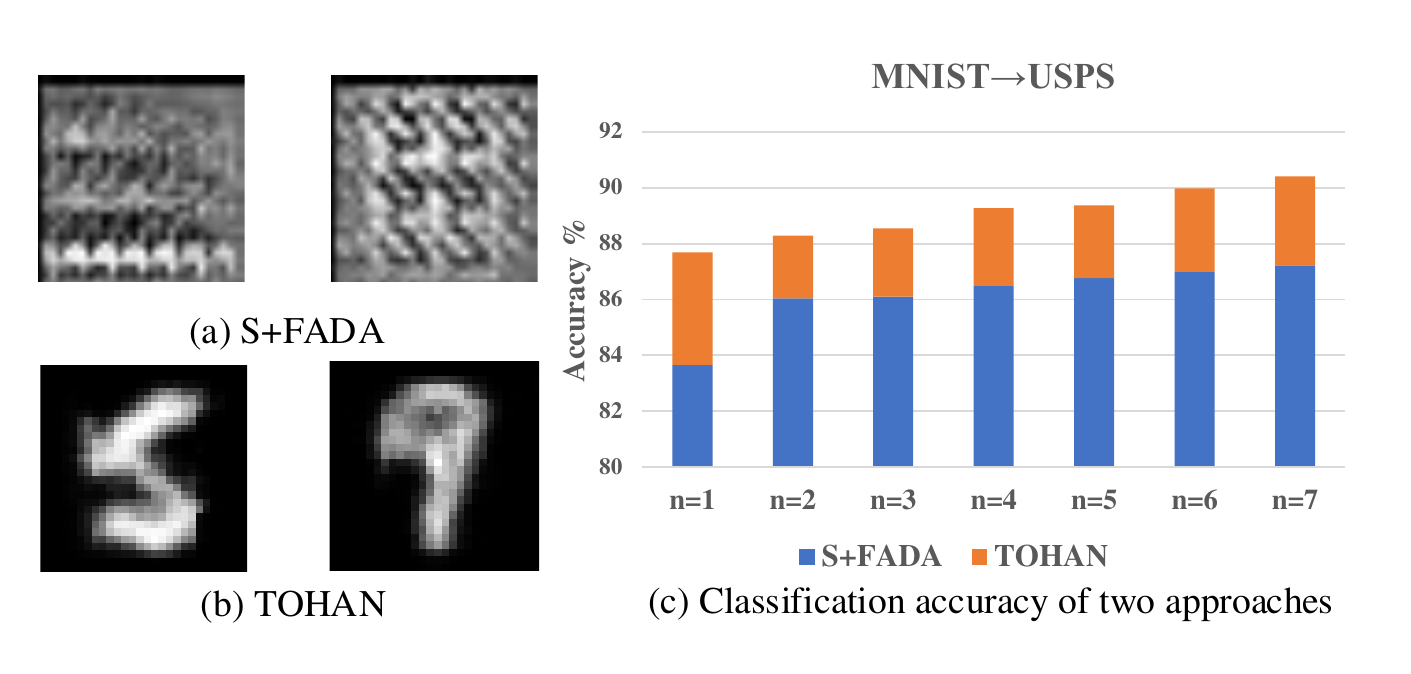}
     \vspace{-0.5em}
 	\caption{A straightforward solution for FHA is a two-step approach. Namely, we can first generate source data and then train a target-domain classifier using the generated source data and an FDA method (e.g., \emph{few-shot  adversarial  domain  adaptation} (FADA)). 
 	A visualized comparison between S+FADA and TOHAN (take MNIST$\rightarrow$USPS as an example) is displayed in subfigures (a) and (b). On the left side, 
 	Subfigure (a) illustrates source-domain data generated by a two-step method: S+FADA. It is clear that the generated data are just noise and do not contain useful information about the source domain. In subfigure (b), we illustrate the intermediate-domain data generated by our method (i.e., TOHAN). It is clear that the generated intermediate-domain data contain useful information about two domains. In subfigure (c), the histogram shows classification accuracy of the two methods, and TOHAN outperforms S+FADA clearly.}
 	\label{fig:intro}
 	\vspace{-1em}
 \end{figure}

\paragraph{T+FADA.} See Figure~\ref{fig:baseline4}. Different from S+FADA, we train a generator with the help of target data instead, and then we generate data close to target domain. We input Gaussian random noise \emph{z} to generator \emph{G} and minimize the distance between $G(z)$ and target data. Finally, we sequentially combine these generated data with adversarial DA method to train a target-domain classifier. 

\begin{figure*}[!t]
	\centering
\subfigure[WA.]{
\label{fig:t_sne_a}
\includegraphics[width=0.3\textwidth]{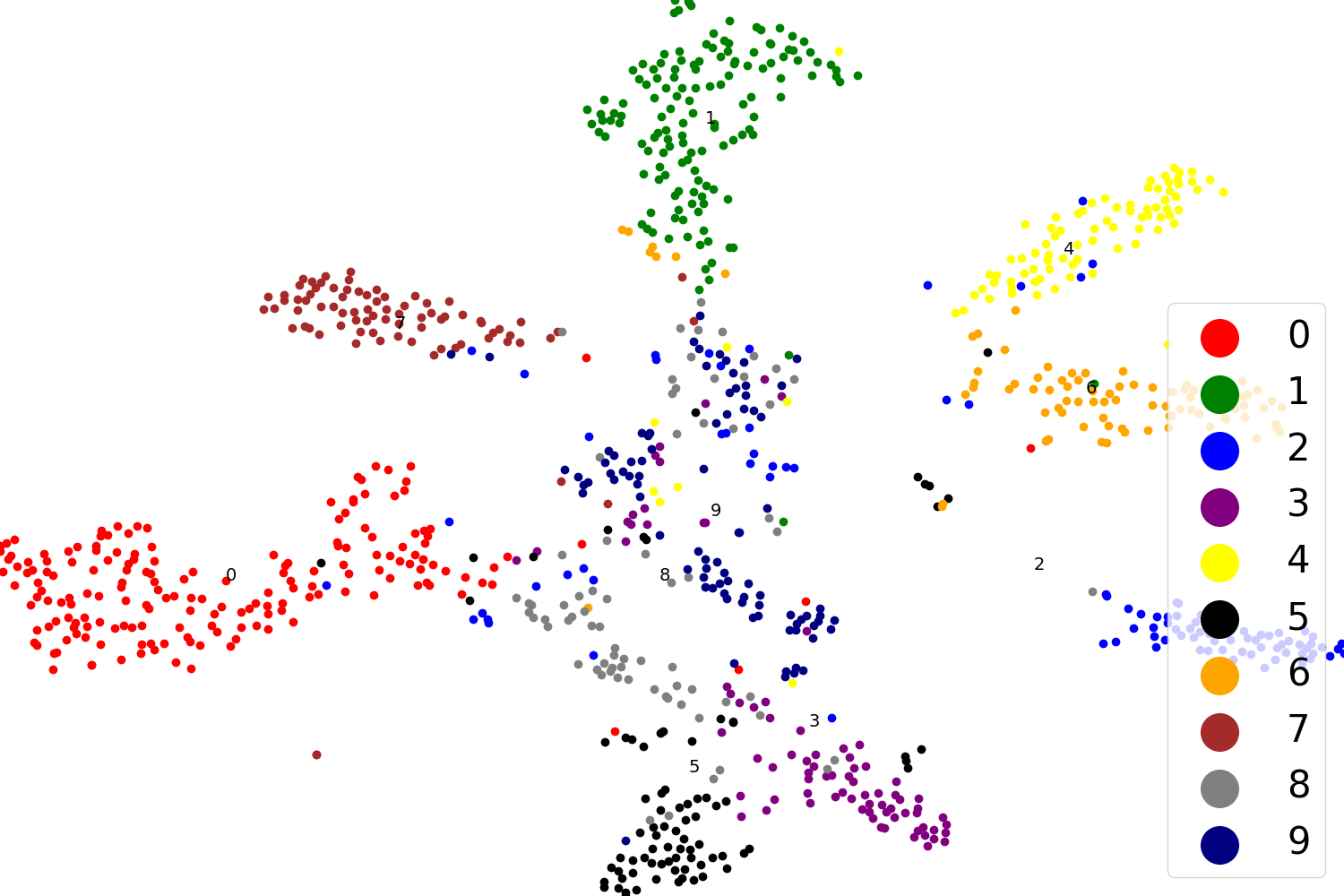}
}
\subfigure[FT.]{
\label{fig:t_sne_b}
\includegraphics[width=0.3\textwidth]{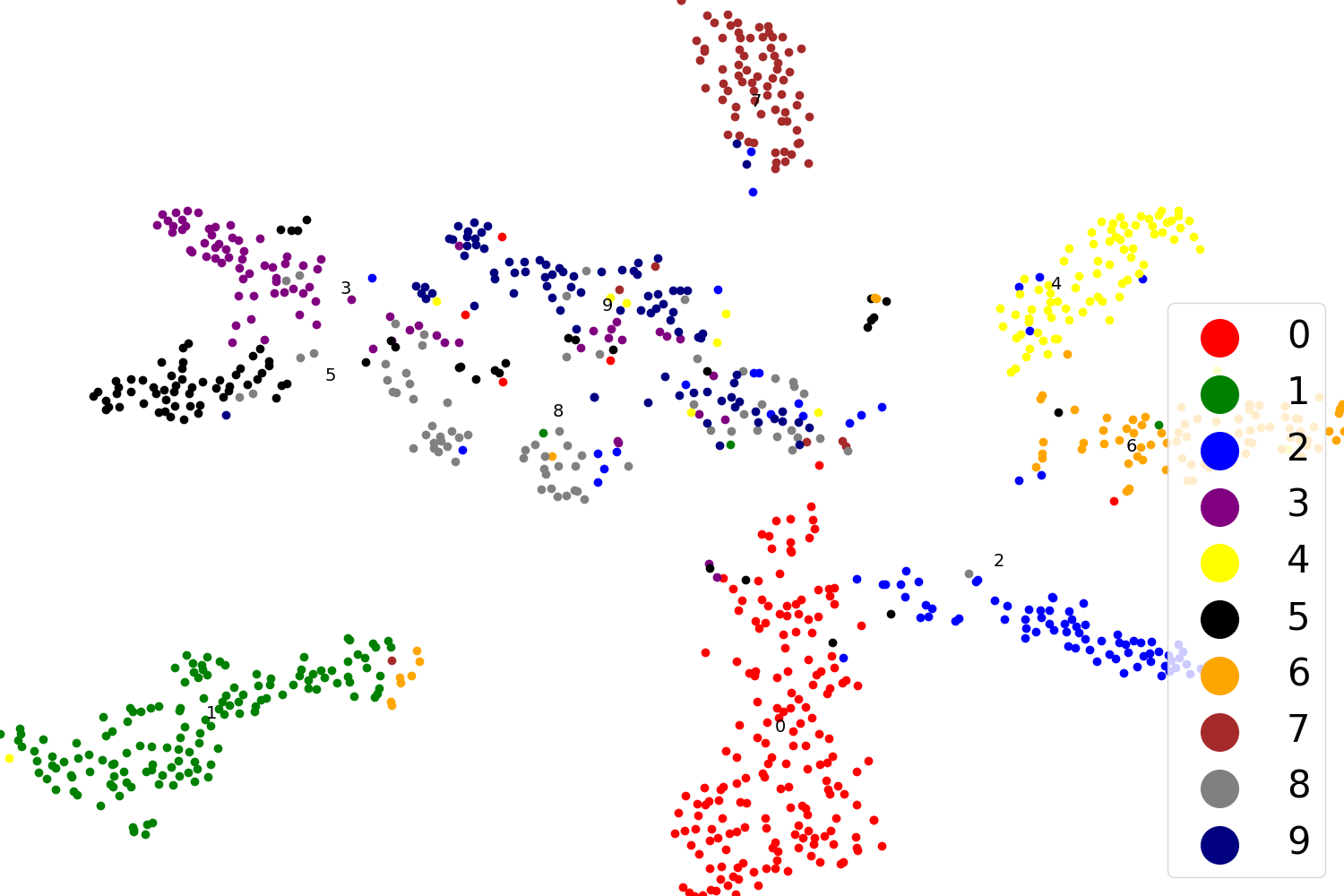}
}
\subfigure[SHOT.]{
\label{fig:t_sne_c}
\includegraphics[width=0.3\textwidth]{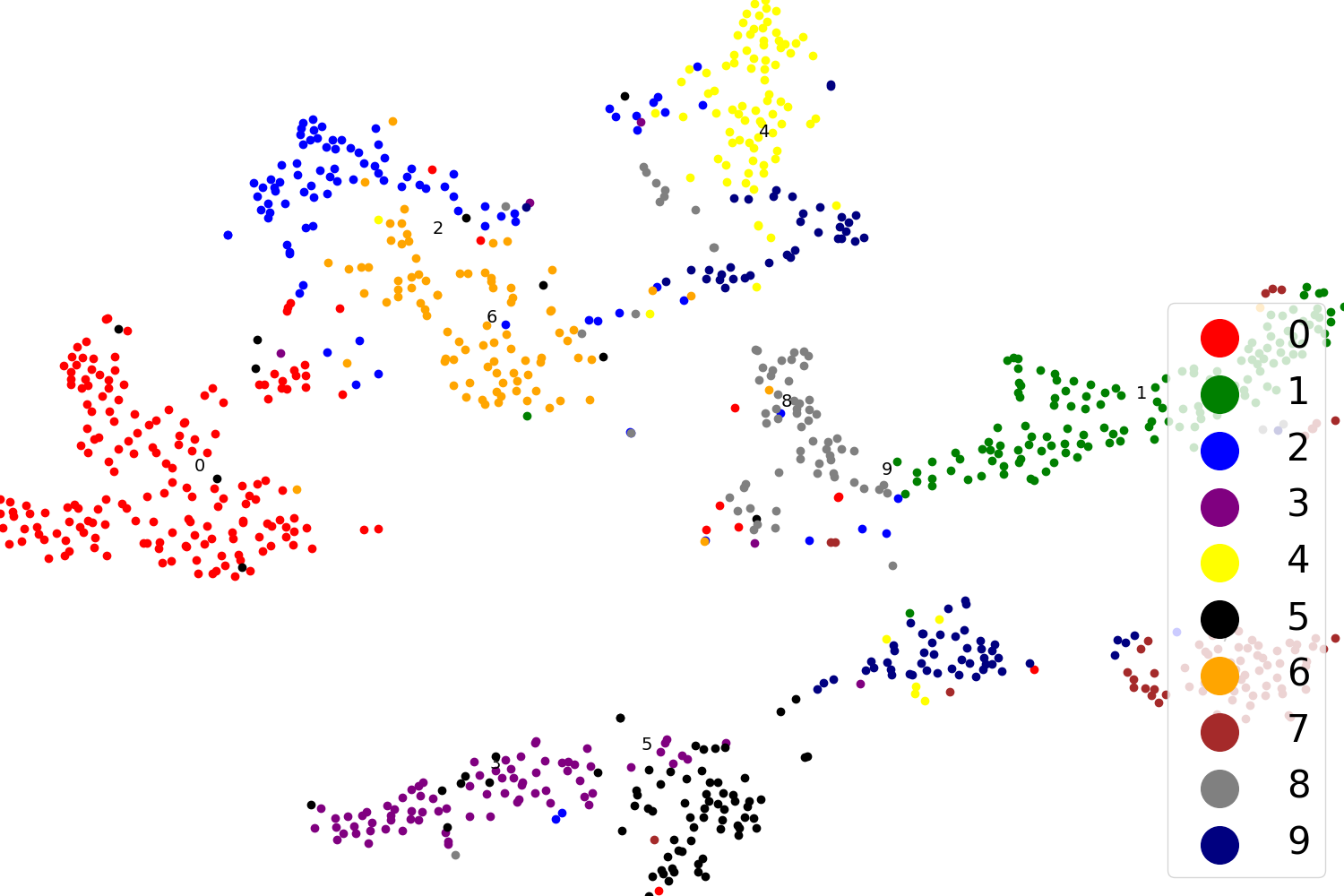}
}
\quad
\subfigure[S+F.]{
\label{fig:t_sne_d}
\includegraphics[width=0.3\textwidth]{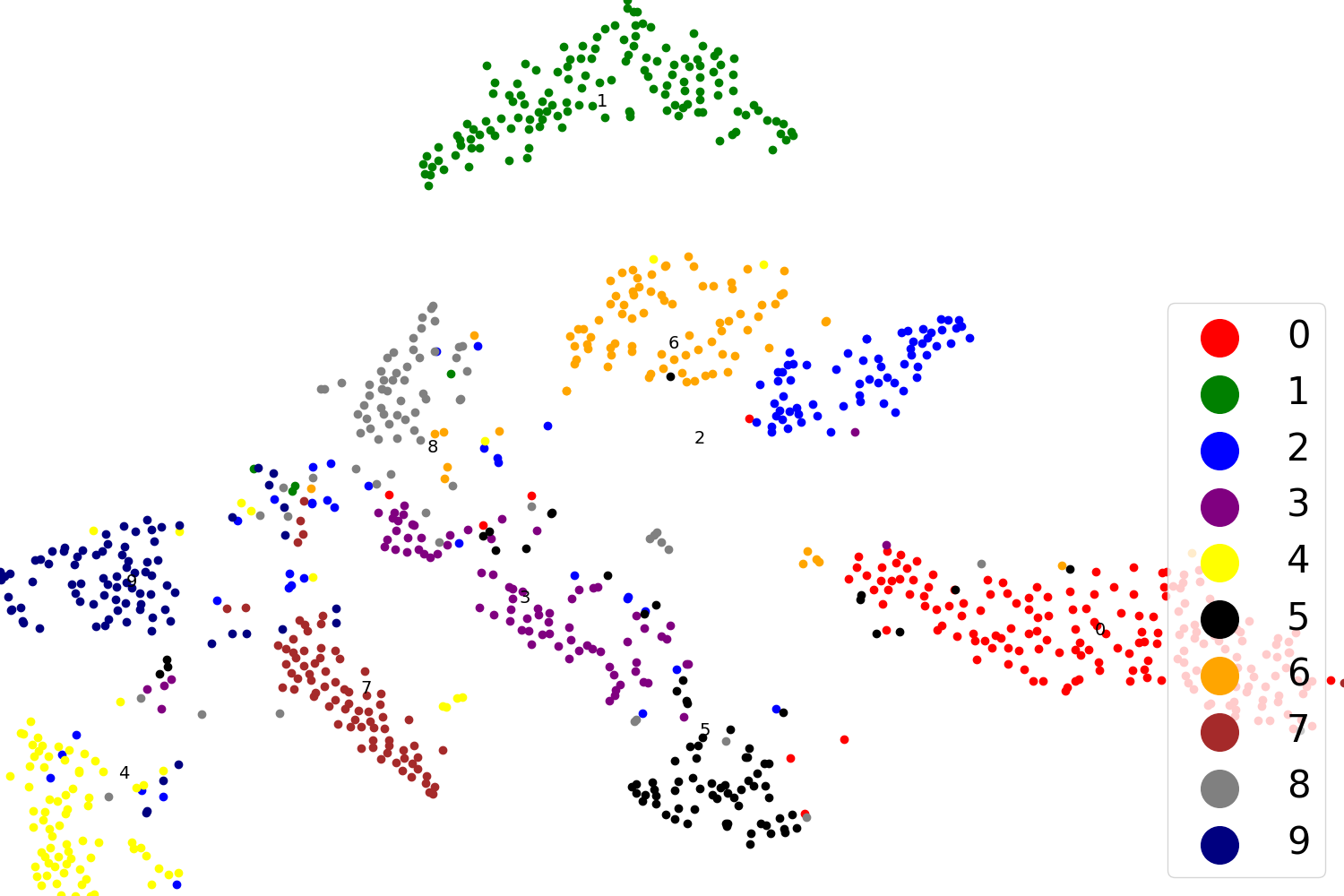}
}
\subfigure[T+F.]{
\label{fig:t_sne_e}
\includegraphics[width=0.3\textwidth]{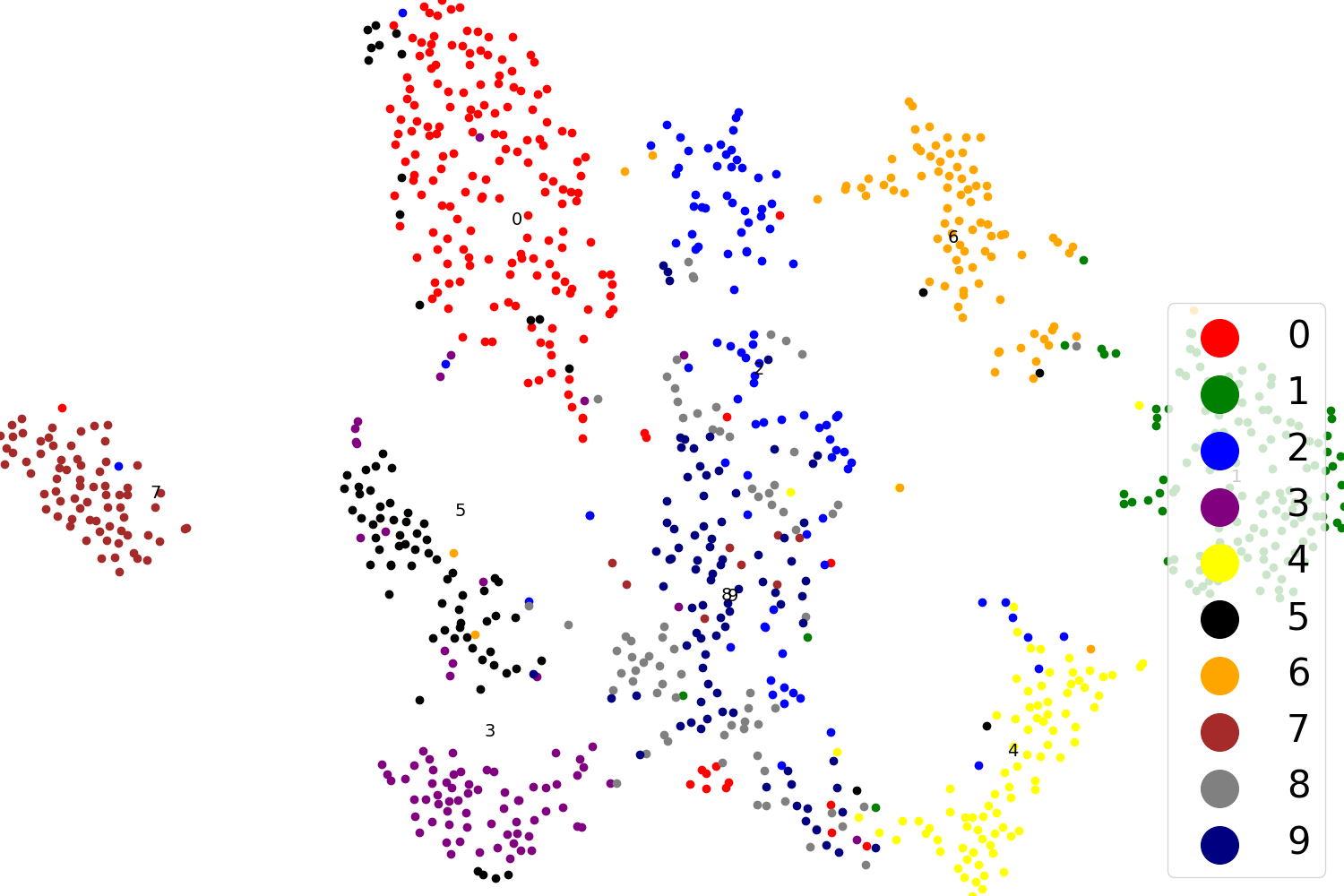}
}
\subfigure[TOHAN.]{
\label{fig:t_sne_f}
\includegraphics[width=0.3\textwidth]{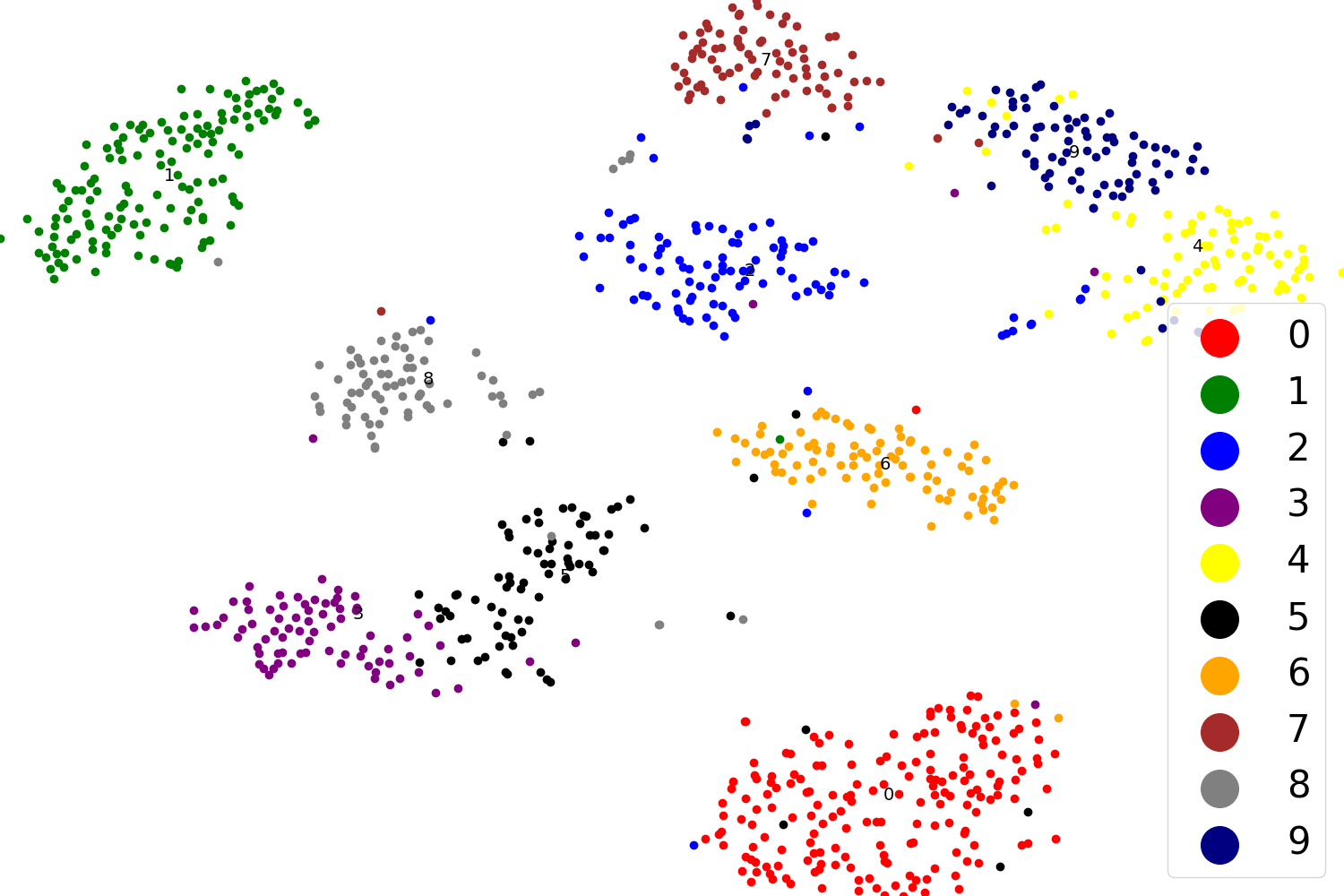}
}
	\caption{The t-SNE visualization for a 10-way classification task (taking \emph{MNIST}$\rightarrow$\emph{USPS} as an example). When we use WA and FT methods, nearly all classes mix together. Although the classification accuracy of SHOT, S+F and T+F are relatively high, there are still a little mixtures among classes. For TOHAN, it can be seen that all classes are separated well. Namely, TOHAN works well for solving FHA problem.}
	\label{fig:t-sne}
\end{figure*}

\section{Implementation Details}
We implement all methods by PyTorch 1.7.1 and Python 3.7.6, and conduct all the experiments on two NVIDIA RTX 2080Ti GPUs.

\label{Asec:Details}
\paragraph{Network architecture.} We select architecture of generators \emph{$G_n$} ($n=1,\dots,N$) from DCGAN \citep{radford2015unsupervised}. For digits tasks, the encoder \emph{g}, classifier \emph{h} and group discriminator \emph{D} share the same architecture in all $6$ tasks, following FADA \citep{motiian2017few}. As for encoder \emph{g}, we employ the backbone network of LeNet-$5$ with batch normalization and dropout. For classifier \emph{h}, we adopt one fully connected layer with softmax fuction. For group discriminator \emph{D}, we adopt $3$ connected layers with softmax function. For objects tasks, we employ Densenet-$169$ \citep{huang2017densely} as encoder \emph{g}.

\paragraph{Network hyper-parameters.} We set fixed hyper-parameters in every method which is irrelevant to dataset, based on the common protocol of domain adaptation \citep{DBLP:conf/iclr/ShuBNE18}. The batch size of generator is set to $32$, and the batch size of group discriminator, encoder, classifier is all $64$. We pre-train the group discriminator for $100$ epochs. Meanwhile, the numbers of training steps of generator, group discriminator, encoder, classifier are set to $500$, $50$, $50$, $50$, respectively. Adam optimizer is with the same learning rate of $1\times10^{-3}$ in generators, encoder, classifier and group discriminator. The tradeoff parameter $\beta$ in Eq.~\eqref{eq:5} is set to $\frac{2}{1+\text{exp}(-10\dot q)}-1$, same as \cite{long2018conditional}. And the tradeoff parameter $\lambda$ in Eq.~\eqref{eq:3} is set to $0.2$ fixed. For the fair comparisons, we only resize and normalize the image and do not use any addition data augment or transformation. Note that, for each experiment, we report the result of the model trained in the last epoch.

\section{Additional Experiments about HTL}
\label{Asec:htl}
In this section, we compare TOHAN with another novel HTL method, i.e., \emph{dynamic knowledge distillation for HTL} (dkdHTL) \cite{yu2020dynamic}. It is worth noting that dkdHTL is a black-box HTL method. That is, we cannot access the parameters of source model. Therefore, in distillation loss, we cannot get the logits, which is used to compute the soften probabilities with a high temperature $T>1$. To address this problem, they tried to solve the logits through soften probabilities approximately. For the sake of fairness, we convert dkdHTL to white-box version. Specifically, we use the standard softmax function in distillation loss, instead of the approximate version. Moreover, we initial the parameters of target model by source model. Then, we show the results of dkdHTL in Table~\ref{dkdHTL:digit} and Table~\ref{dkdHTL:object}.

We find that TOHAN outperforms dkdHTL in most tasks significantly. However, in \emph{S}$\rightarrow$\emph{M}, \emph{U}$\rightarrow$\emph{M}, and \emph{S}$\rightarrow$\emph{U}, there exists few subtasks that dkdHTL outperforms TOHAN. In these three tasks, the complexity of source domain is high, while the complexity of target domain is low. TOHAN cannot generate qualified intermediate data effectively when the number of target data is very few and the source domain is highly complex simultaneously. dkdHTL is only suitable for tasks with uncomplicated target domain. The main reason is that the training data of dkdHTL are only the few target data. If the target domain is complex, it is very easy to overfit. Therefore, as for tasks with complex target domains, TOHAN has the upper hand.

\begin{table*}[!t]
\centering
\footnotesize
\setlength\tabcolsep{4.4pt}
\caption{Comparison between dkdHTL and TOHAN. We report the classification accuracy$\pm$standard deviation ($\%$) on $6$ digits FHA tasks. Bold value represents the highest accuracy on each column.}
\vspace{1mm}
  \begin{tabular}{  l  c  c  c  c  c  c  c  c   c  c  c}
  \toprule
   \multicolumn{1}{c}{\multirow{2}{*}{Tasks}}  & 
   \multicolumn{1}{c}{FHA} & \multicolumn{7}{c}{{Number of Target Data per Class}} \\
   \cline{3-9}
 & Methods & {1} & {2}  & {3} & {4} & {5}  & {6} & {7} \\
\midrule
 \multirow{2}{*}{\emph{M}$\rightarrow$\emph{S}}  & {dkdHTL} & 24.1$\pm$0.7 & 24.1$\pm$0.3 & 24.5$\pm$0.6 & 24.4$\pm$1.1 & 25.4$\pm$0.8 & 25.7$\pm$0.5 & 26.1$\pm$1.1 \\
 &  {TOHAN} & \textbf{26.7$\pm$0.1} & \textbf{28.6$\pm$1.1} & \textbf{29.5$\pm$1.4} & \textbf{29.6$\pm$0.4} & \textbf{30.5$\pm$1.2} & \textbf{32.1$\pm$0.2} & \textbf{33.2$\pm$0.8} \\
\hline
 \multirow{2}{*}{\emph{S}$\rightarrow$\emph{M}} & {dkdHTL} & 71.2$\pm$1.2 & \textbf{83.4}$\pm$0.4 & \textbf{88.5$\pm$0.6} & \textbf{88.2$\pm$0.7} & \textbf{89.5$\pm$0.7} & \textbf{89.6$\pm$0.4} & \textbf{90.3$\pm$0.2} \\
 & {TOHAN} & \textbf{76.0$\pm$1.9} & 83.3$\pm$0.3 & 84.2$\pm$0.4 & 86.5$\pm$1.1 & 87.1$\pm$1.3 & 88.0$\pm$0.5 & 89.7$\pm$0.5 \\
\hline
 \multirow{2}{*}{\emph{M}$\rightarrow$\emph{U}} & {dkdHTL} & 65.2$\pm$0.6 & 70.5$\pm$1.3 & 74.4$\pm$0.6 & 77.8$\pm$0.6 & 78.6$\pm$0.9 & 78.8$\pm$1.1 & 79.0$\pm$1.3 \\
 & {TOHAN} & \textbf{87.7$\pm$0.7} & \textbf{88.3$\pm$0.5} & \textbf{88.5$\pm$1.2} & \textbf{89.3$\pm$0.9} & \textbf{89.4$\pm$0.8} & \textbf{90.0$\pm$1.0} & \textbf{90.4$\pm$1.2} \\
\hline
 \multirow{2}{*}{\emph{U}$\rightarrow$\emph{M}} & {dkdHTL} & 83.2$\pm$0.2 & \textbf{85.5$\pm$0.5} & \textbf{85.9$\pm$0.4} & 85.7$\pm$0.8 & 86.2$\pm$0.2 & 86.2$\pm$0.4 & 86.8$\pm$0.3 \\
 & {TOHAN} & \textbf{84.0$\pm$0.5} & 85.2$\pm$0.3 & 85.6$\pm$0.7 & \textbf{86.5$\pm$0.5} & \textbf{87.3$\pm$0.6} & \textbf{88.2$\pm$0.7} & \textbf{89.2$\pm$0.5} \\
\hline
 \multirow{2}{*}{\emph{S}$\rightarrow$\emph{U}} & {dkdHTL} & \textbf{76.3$\pm$0.3} & \textbf{77.6$\pm$0.5} & 78.9$\pm$0.4 & 79.5$\pm$0.4 & 80.2$\pm$0.5 & 80.7$\pm$0.4 & 82.1$\pm$0.4 \\
 &{TOHAN} & 75.8$\pm$0.9 & 76.8$\pm$1.2 & \textbf{79.4$\pm$0.9} & \textbf{80.2$\pm$0.6} & \textbf{80.5$\pm$1.4} &\textbf{81.1$\pm$1.1} & \textbf{82.6$\pm$1.9} \\
\hline
 \multirow{2}{*}{\emph{U}$\rightarrow$\emph{S}} & {dkdHTL} & 20.5$\pm$0.8 & 20.9$\pm$0.4 & 21.7$\pm$0.3 & 23.8$\pm$0.2 & 24.5$\pm$0.7 & 25.5$\pm$0.6 & 25.7$\pm$0.4 \\
  & {TOHAN} & \textbf{29.9$\pm$1.2} & \textbf{30.5$\pm$1.2} & \textbf{31.4$\pm$1.1} & \textbf{32.8$\pm$0.9} & \textbf{33.1$\pm$1.0} & \textbf{34.0$\pm$1.0} & \textbf{35.1$\pm$1.8} \\
\bottomrule
\end{tabular}
\label{dkdHTL:digit}
\end{table*}

\begin{table}[!t]
  \centering
  \small
  \caption{Comparison between dkdHTL and TOHAN. We report the classification accuracy$\pm$standard deviation ($\%$) on 2 objects FHA tasks: CIFAR-$10\to$ STL-$10$ and STL-$10\to$ CIFAR-$10$. Bold value represents the highest accuracy ($\%$) among TOHAN and benchmark solutions.}
  \vspace{1mm}
    \begin{tabular}{ccc}
    \toprule
       Tasks  & dkdHTL   & TOHAN \\
    \midrule
    CIFAR-$10\to$ STL-$10$ & 70.8$\pm$0.7 & \textbf{72.8$\pm$0.1} \\
    \midrule
    STL-$10\to$ CIFAR-$10$ &  52.4$\pm$0.5 & \textbf{56.6$\pm$0.3}\\
    \bottomrule
    \end{tabular}%
    
  \label{dkdHTL:object}%
\end{table}%

\section{Additional Analysis}
\label{Asec:AddAna}

\paragraph{Visualization of Results.}
We use t-SNE to visualize the feature (the penultimate layer of the classifier) extracted by TOHAN and $5$ benchmark solutions on \emph{M}$\rightarrow$\emph{U} task (see Figure~\ref{fig:t-sne}). When we use WA and FT methods, nearly all classes mix together. Although the classification accuracy of SHOT, S+F and T+F are relatively high, there are still a little mixtures among classes. For TOHAN, all classes are separated well, which demonstrates that TOHAN works well for solving FHA problem.
\paragraph{Detailed Analysis of Ablation Study.}
Table~\ref{full_ablation_study} shows the full results of ablation study. It is clear that TOHAN performs better than the corresponding two-step approach ST+FADA. However, when the number of target data is too small, ST+FADA may outperform TOHAN with a small probability. The reason for this abnormal phenomenon may be the limitation of target data. Although we use the technique of paring data, overfitting still occurs when data are scarce.

\begin{table}[!t]
	\centering
	\small
	\setlength\tabcolsep{5pt}
	\caption{Ablation Study. Bold value represents the highest accuracy ($\%$) on each column. Data behind '$\pm$' is the standard derivation.}
	\vspace{1mm}
	\begin{tabular}{  l  c  c  c  c  c  c  c  c  c  c }
		\toprule
		\multicolumn{1}{c}{\multirow{2}{*}{Tasks}} & \multicolumn{1}{c}{FHA} & \multicolumn{7}{c}{{Number of target data}} \\
		\cline{3-9}
		& Methods & {1} & {2}  & {3} & {4} & {5}  & {6} & {7} \\
		\midrule
		\multirow{4}{*}{\emph{M}$\rightarrow$\emph{S}}  
		& {S+FADA} & 25.6$\pm$1.3 & 27.7$\pm$0.5 & 27.8$\pm$0.7 & 28.2$\pm$1.3 & 28.4$\pm$1.4 & 29.0$\pm$1.0 & 29.6$\pm$1.9 \\
		 & {T+FADA}& 25.3$\pm$1.0 & 26.3$\pm$0.8 & 28.9$\pm$1.0 & 29.1$\pm$1.3 & 29.2$\pm$1.3 & 31.9$\pm$0.4 & 32.4$\pm$1.8 \\
		 & {ST+FADA} & 25.7$\pm$0.7 & 28.1$\pm$0.9 & 28.5$\pm$1.2 & 29.2$\pm$1.0 & 29.2$\pm$0.8 & 31.3$\pm$1.7 & 32.0$\pm$0.8 \\
		 & {TOHAN} & \textbf{26.7$\pm$0.1} & \textbf{28.6$\pm$1.1} & \textbf{29.5$\pm$1.4} & \textbf{29.6$\pm$0.4} & \textbf{30.5$\pm$1.2} & \textbf{32.1$\pm$0.23} & \textbf{33.2$\pm$0.8} \\
		\hline
		\multirow{4}{*}{\emph{S}$\rightarrow$\emph{M}}  
		 & {S+FADA} & 74.4$\pm$1.5 & 83.1$\pm$0.7 & 83.3$\pm$1.1 & 85.9$\pm$0.5 & 86.0$\pm$1.2 & 87.6$\pm$2.6 & 89.1$\pm$1.0 \\
		 & {T+FADA} & 74.2$\pm$1.8 & 81.6$\pm$4.0 & 83.4$\pm$0.8 & 82.0$\pm$2.3 & 86.2$\pm$0.7 & 87.2$\pm$0.8 & 88.2$\pm$0.6 \\
		 & {ST+FADA} & 74.3$\pm$1.2 & \textbf{83.7$\pm$1.0} & 83.8$\pm$0.8 & 85.8$\pm$0.6 & 86.0$\pm$0.9 & 87.7$\pm$0.8 & 89.0$\pm$0.6 \\
		 & {TOHAN} & \textbf{76.0$\pm$1.9} & 83.3$\pm$0.3 & \textbf{84.2$\pm$0.4} & \textbf{86.5$\pm$1.1} & \textbf{87.1$\pm$1.3} & \textbf{88.0$\pm$0.5} & \textbf{89.7$\pm$0.5} \\
		\hline
		\multirow{4}{*}{\emph{M}$\rightarrow$\emph{U}}   
		 & {S+FADA} & 83.7$\pm$0.9 & 86.0$\pm$0.4 & 86.1$\pm$1.1 & 86.5$\pm$0.8 & 86.8$\pm$1.4 & 87.0$\pm$0.6 & 87.2$\pm$0.8 \\
		 & {T+FADA} & 84.2$\pm$0.1 & 84.2$\pm$0.3 & 85.2$\pm$0.9 & 85.2$\pm$0.6 & 86.0$\pm$1.5 & 86.8$\pm$1.5 & 87.2$\pm$0.5 \\
		 & {ST+FADA} & 86.1$\pm$1.5 & 87.1$\pm$1.6 & 86.9$\pm$0.7 & 87.9$\pm$1.1 & 88.0$\pm$1.2 & 88.3$\pm$0.7 & 88.5$\pm$1.3 \\
		 & {TOHAN} & \textbf{87.7$\pm$0.7} & \textbf{88.3$\pm$0.5} & \textbf{88.5$\pm$1.2} & \textbf{89.3$\pm$0.9} & \textbf{89.4$\pm$0.8} & \textbf{90.0$\pm$1.0} & \textbf{90.4$\pm$1.2} \\
		\hline
		\multirow{4}{*}{\emph{U}$\rightarrow$\emph{M}}   
		 & {S+FADA} & 83.2$\pm$0.2 & 83.9$\pm$0.3 & 84.9$\pm$1.2 & 85.6$\pm$0.5 & 85.7$\pm$0.6 & 86.2$\pm$0.6 & 87.2$\pm$1.1 \\
		 & {T+FADA} & 82.9$\pm$0.7 & 83.9$\pm$0.2 & 84.7$\pm$0.8 & 85.4$\pm$0.6 & 85.6$\pm$0.7 & 86.3$\pm$0.9 & 86.6$\pm$0.7 \\
		 & {ST+FADA} & \textbf{84.0$\pm$0.7} & 84.2$\pm$0.5 & 85.3$\pm$1.0 & 85.6$\pm$1.2 & 86.7$\pm$1.0 & 86.5$\pm$0.5 & 88.0$\pm$1.0 \\
		 & {TOHAN} & 84.0$\pm$0.5 & \textbf{85.2$\pm$0.3} & \textbf{85.6$\pm$0.7} & \textbf{86.5$\pm$0.5} & \textbf{87.3$\pm$0.6} & \textbf{88.2$\pm$0.7} & \textbf{89.2$\pm$0.5} \\
		\hline
		\multirow{4}{*}{\emph{S}$\rightarrow$\emph{U}}   
		 & {S+FADA} & 72.2$\pm$1.4 & 73.6$\pm$1.4 & 74.7$\pm$1.4 & 76.2$\pm$1.3 & 77.2$\pm$1.7 & 77.8$\pm$3.0 & 79.7$\pm$1.9 \\
		 & {T+FADA} & 71.7$\pm$0.6 & 74.3$\pm$1.9 & 74.5$\pm$0.8 & 75.9$\pm$2.1 & 77.7$\pm$1.5 & 76.8$\pm$1.8 & 79.7$\pm$1.9 \\
		 & {ST+FADA} & 73.1$\pm$0.9 & 75.2$\pm$1.3 & 75.9$\pm$0.8 & 76.3$\pm$1.5 & 78.3$\pm$1.6 & 79.1$\pm$1.7 & 79.7$\pm$1.6 \\
		 & {TOHAN} & \textbf{75.8$\pm$0.9} & \textbf{76.8$\pm$1.2} & \textbf{79.4$\pm$0.9} & \textbf{80.2$\pm$0.6} & \textbf{80.5$\pm$1.4} & \textbf{81.1$\pm$1.1} & \textbf{82.6$\pm$1.9} \\
		\hline
		\multirow{4}{*}{\emph{U}$\rightarrow$\emph{S}}   
		 & {S+FADA} & 28.1$\pm$1.2 & 28.7$\pm$1.3 & 29.0$\pm$1.2 & 30.1$\pm$1.1 & 30.3$\pm$1.3 & 30.7$\pm$1.0 & 30.9$\pm$1.5 \\
		 & {T+FADA} & 27.5$\pm$1.4 & 27.9$\pm$0.9 & 28.4$\pm$1.3 & 29.4$\pm$1.8 & 29.5$\pm$0.7 & 30.2$\pm$1.0 & 30.4$\pm$1.7 \\
		 & {ST+FADA} & 28.1$\pm$1.3 & 28.9$\pm$0.7 & 29.2$\pm$1.5 & 29.8$\pm$1.2 & 31.0$\pm$0.9 & 31.2$\pm$0.9 & 33.2$\pm$1.7 \\
		 & {TOHAN} & \textbf{29.9$\pm$1.2} & \textbf{30.5$\pm$1.2} & \textbf{31.4$\pm$1.1} & \textbf{32.8$\pm$0.9} & \textbf{33.1$\pm$1.0} & \textbf{34.0$\pm$1.0} & \textbf{35.1$\pm$1.8} \\
		\bottomrule
	\end{tabular}
 
	\label{full_ablation_study}
\end{table}

\section{Limitations}
\label{Asec:limitations}
The main limitation in this paper is that the run time of TOHAN is a little long. The main reason causing the long run time is the generation part of TOHAN. Specifically, the second term of Eq.~\eqref{eq:3} is time-consuming, as we need to calculate the distances between each intermediate data and each target data. We will optimize the generation part to overcome the time-consuming problem.

Although the generation part of TOHAN is a little time consuming, it solves the challenge of lacking source data in FHA efficiently. TOHAN can generate intermediate data containing the knowledge of source domain and target domain. Therefore, we not only adapt more useful source domain knowledge to target domain, but also prevent the privacy leakage of source domain. 

\section{Potential Negative Societal Impacts}
\label{Asec:impact}
The main potential negative societal impact in this paper is that TOHAN has a certain randomness. This is, TOHAN may not perform well consistently across various tasks. For example, TOHAN may fail to adapt knowledge between two domains that have a large discrepancy. Therefore, if TOHAN makes a mistake in a critical area, the consequences will be bad.

\section*{Checklist}


\begin{enumerate}

\item For all authors...
\begin{enumerate}
  \item Do the main claims made in the abstract and introduction accurately reflect the paper's contributions and scope?
    \answerYes{}
  \item Did you describe the limitations of your work? 
    \answerYes{Detailed limitations are in Appendix~\ref{Asec:limitations}.}
  \item Did you discuss any potential negative societal impacts of your work?
    \answerYes{Detailed potential negative societal impacts are in Appendix~\ref{Asec:impact}.}
  \item Have you read the ethics review guidelines and ensured that your paper conforms to them?
    \answerYes{}
\end{enumerate}

\item If you are including theoretical results...
\begin{enumerate}
  \item Did you state the full set of assumptions of all theoretical results?
    \answerYes{Please see Section~\ref{sec:SSL_moti}.}
	\item Did you include complete proofs of all theoretical results?
    \answerYes{Please see Appendix~\ref{Asec:Thm}.}
\end{enumerate}

\item If you ran experiments...
\begin{enumerate}
  \item Did you include the code, data, and instructions needed to reproduce the main experimental results (either in the supplemental material or as a URL)?
    \answerYes{}
  \item Did you specify all the training details (e.g., data splits, hyperparameters, how they were chosen)?
    \answerYes{Please see Appendix~\ref{Asec:Details}.}
	\item Did you report error bars (e.g., with respect to the random seed after running experiments multiple times)?
    \answerYes{We have reported the standard deviations for each results.}
	\item Did you include the total amount of compute and the type of resources used (e.g., type of GPUs, internal cluster, or cloud provider)?
    \answerYes{Please see Appendix~\ref{Asec:Details}.}
\end{enumerate}

\item If you are using existing assets (e.g., code, data, models) or curating/releasing new assets...
\begin{enumerate}
  \item If your work uses existing assets, did you cite the creators?
    \answerYes{}
  \item Did you mention the license of the assets?
    \answerNA{}
  \item Did you include any new assets either in the supplemental material or as a URL?
    \answerNo{}
  \item Did you discuss whether and how consent was obtained from people whose data you're using/curating?
    \answerNo{We use only standard datasets.}
  \item Did you discuss whether the data you are using/curating contains personally identifiable information or offensive content?
    \answerNo{We use only standard datasets.}
\end{enumerate}

\item If you used crowdsourcing or conducted research with human subjects...
\begin{enumerate}
  \item Did you include the full text of instructions given to participants and screenshots, if applicable?
  \answerNA{}
  \item Did you describe any potential participant risks, with links to Institutional Review Board (IRB) approvals, if applicable?
  \answerNA{}
  \item Did you include the estimated hourly wage paid to participants and the total amount spent on participant compensation?
  \answerNA{}
\end{enumerate}

\end{enumerate}

\newpage

\end{document}